\newcommand{\bm}[1]{\mbox{\boldmath{$#1$}}}
\def\x{{\mathbf x}}
\def\w{{\mathbf w}}
\def\y{{\mathbf y}}
\def\e{{\mathbf e}}
\def\K{{\mathbf K}}
\def\M{{\mathbf M}}
\def\W{{\mathbf W}}
\def\Y{{\mathbf Y}}
\def\I{{\mathbf I}}
\def\O{{\mathcal O}}
\def\B{{\mathcal B}}
\def\D{{\mathcal D}}
\def\A{{\mathcal A}}
\def\Z{{\mathcal Z}}
\def\DD{{\mathbf D}}
\def\ZZ{{\mathbf Z}}
\def\tr{{\mbox{tr}}}
\def\wK{{\widetilde{\K}}}
\def\wY{{\widetilde{\Y}}}
\def\wB{{\widetilde{\B}}}
\def\cY{{\mathcal{Y}}}
\def\by{{\bar y}}
\def\bby{{\bar {\mathbf{y}}}}
\def\bbY{{{\mathbf{Y}}}}
\def\coly{{\bar{\bar {\mathbf{y}}}}}
\newtheorem{thm}{Theorem~}
\begin{document}
%
\title{Convex Discriminative Multitask Clustering}
%
%
%
%

\author{~Xiao-Lei~Zhang,~\IEEEmembership{Member,~IEEE}
\IEEEcompsocitemizethanks{\IEEEcompsocthanksitem Xiao-Lei Zhang was with the
Tsinghua National Laboratory for Information Science and Technology, Department of Electronic Engineering, Tsinghua University, Beijing, China, 100084.\protect\\
E-mail: huoshan6@126.com}
\thanks{This work was finished when the author was a visiting scholar with the Department of Computer Science and Engineering, Ohio State University, OH, USA, 43210. This work was supported by the China Postdoctoral Science Foundation funded project under Grant 2012M520278.}
}


\IEEEcompsoctitleabstractindextext{%
\begin{abstract}
Multitask clustering tries to improve the clustering performance of multiple tasks simultaneously by taking their relationship into account.
Most existing multitask clustering algorithms fall into the type of generative clustering, and none are formulated as convex optimization problems.
In this paper, we propose two convex Discriminative Multitask Clustering (DMTC) algorithms to address the problems. Specifically, we first propose a Bayesian DMTC framework. Then, we propose two convex DMTC objectives within the framework. The first one, which can be seen as a technical combination of the convex multitask feature learning and the convex Multiclass Maximum Margin Clustering (M3C), aims to learn a shared feature representation. The second one, which can be seen as a combination of the convex multitask relationship learning and M3C, aims to learn the task relationship. The two objectives are solved in a uniform procedure by the efficient cutting-plane algorithm.
Experimental results on a toy problem and two benchmark datasets demonstrate the effectiveness of the proposed algorithms.
\end{abstract}

\begin{keywords}
Convex optimization, cutting-plane algorithm, discriminative clustering, unsupervised multitask learning
\end{keywords}}

\maketitle

\IEEEdisplaynotcompsoctitleabstractindextext

\IEEEpeerreviewmaketitle

 \setlength{\arraycolsep}{0.0em}
\section{Introduction}

  With the rapid development of information technology, massive amounts of unlabeled task-specific data are generated every day. Many tasks can be seen as self-contained, yet somewhat similar.
  Because labeling the data manually is time-consuming and expensive, we often resort to \textit{clustering} algorithms for mining the undiscovered knowledge in the data.


In traditional data mining studies, we do clustering to each task independently. However, some tasks have so few data that the data distributions cannot be covered well. Hence, it is natural to think about clustering several unlabeled tasks together for improving the performance on each individual task. However, although some tasks are similar, there are still many tasks mutually unrelated, dissimilar, and even reverse. Simply merging all tasks together for clustering might be harmful. Therefore, \textit{it is urgent to develop a Multitask Clustering (MTC) algorithm that 1) not only is powerful in clustering each individual task 2) but also can mine the task relationships automatically from the data so as to further improve the clustering performance.}
For achieving our goal on MTC, we need to resort to two research areas -- Multitask Learning (MTL) and clustering.


\textbf{Multitask Learning:} MTL \cite{caruana1997multitask}, also known as \textit{learning to learn} \cite{thrun1998learning}, learns multiple (probably) related tasks simultaneously for improving the generalization performance on each task. It can be reviewed in three respects. They are 1) ``what to learn'', 2) ``when to learn'', and 3)``how to learn''  \cite{pan2010survey}.


``What to learn'' asks what knowledge is shared across tasks \cite{pan2010survey}. In this respect, the MTL techniques can be categorized to two classes:
 The first class is to share common feature or kernel representations, such as sharing the hidden units of neural networks \cite{caruana1997multitask,bakker2003task,bengio2011deep}, sharing a common representation within the regularization framework \cite{ando2005framework, evgeniou2007multi,  argyriou2008convex, liu2009multi,zhang2010probabilistic,chen2009convex,chen2012convex}, etc.
The second class is to share common model parameters, such as placing a common prior across tasks within the hierarchical Bayesian framework \cite{lawrence2004learning,xue2007multi,liu2009semisupervised}, learning the differences of the task-specific models in Frobenius norms under the regularization framework \cite{evgeniou2004regularized,evgeniou2006learning, zhang2010convex}, etc.

``When to learn'' asks in which situation the tasks can share. Specifically, many MTL algorithms assume that the tasks are mutually related which is an ideal situation. In practice, there might be some outlier tasks or tasks with negative correlation. Learning with these tasks results in \textit{negative transfer} or worsened performance. Hence, how to discover the task relationship is another key issue that is becoming more and more attractive \cite{bakker2003task,jacob2008clustered,zhou2011clustered,romera2012exploiting,zhang2010convex}. One method is to group tasks into several clusters where the tasks in different groups are regarded as unrelated \cite{bakker2003task,jacob2008clustered,zhou2011clustered,romera2012exploiting}. Another method is to learn the inter-task covariance matrix of the multivariate Gaussian prior \cite{zhang2010convex}.

``How to learn'' asks how the optimization problem can reach a good solution (i.e. performance) in a reasonable time when the first two respects are specified. In respect of effectiveness, among the aforementioned MTL methods, how to construct convex optimization objectives is a common thought in MTL since the global optimum solutions can be achieved and the optimization can be simplified.
Until present, several convex MTL algorithms have been developed, and better performance was reported \cite{argyriou2008convex,jacob2008clustered,chen2009convex,chen2012convex,zhang2010convex}. In respect of efficiency, the alternating optimization method that optimizes in turn one parameter with others fixed is a common efficient method.

Summarizing the aforementioned, in the new MTC design, we take the convexity and the task relationship mining as two important considerations.

\textbf{Clustering:}
Clustering is the process of partitioning a set of data observations into multiple clusters so that the observations within a cluster are similar, and the observations in different clusters are very dissimilar \cite{han2011data}.
 Since the early works on $k$-means, many clustering algorithms have been developed, such as kernel $k$-means, spectral clustering \cite{shi2002normalized,ng2001spectral}, hierarchical clustering, probabilistic-based clustering, metric clustering, clustering nonnumerical data, clustering high dimensional data, clustering graph data, etc.

Like supervised classification, clustering algorithms can be classified to two classes -- \textit{generative clustering} and \textit{discriminative clustering}. The generative clustering algorithms model $p(\x,y;\theta)$ where $\x$ and $y$ denotes the input and output of the learning system respectively and $\theta$ is the parameter. The discriminative clustering algorithms only focus on modeling $p(y|\x;\theta)$. Many traditional clustering algorithms fall into the class of the generative clustering, such as $k$-means, Gaussian mixture model, restricted Boltzman machine, etc. However, when we only care about the predicted labels but not the distribution of the observations, the generative clustering methods seem solving a more general problem than what we want. Moreover, if we make a wrong model assumption on the underlying data distribution, we may get a rather weak clustering result. This phenomenon has been observed in both the supervised classification \cite{jordan2002discriminative} and the clustering \cite{bach2007diffrac}. Due to the above problems, many discriminative clustering methods have been developed \cite{ng2001spectral,ye2007discriminative,bach2007diffrac,xu2005maximum,
xu2005unsupervised,zhang2009maximum,wang2010mmc,li2009tighter,zhang2012linearithmic,gomes2010discriminative,wang2012discriminative}, such as spectral clustering \cite{ng2001spectral}, Maximum Margin Clustering (MMC) \cite{xu2005maximum,xu2005unsupervised,zhang2009maximum,wang2010mmc,li2009tighter,zhang2012linearithmic}, regularized information maximization \cite{gomes2010discriminative}, etc.

Summarizing the aforementioned, in the new MTC design, we should try to construct a discriminative MTC clustering algorithm but not a generative one.

\textbf{Multitask Clustering:}
Although the supervised MTL has been studied extensively in the aforementioned respects, the unsupervised MTL, i.e. MTC \cite{Teh05sharingclusters}, seems far from explored yet. Only very recently, it received more and more attention \cite{Teh05sharingclusters,dai2008self,gu2009learning,zhang2010multitask,zhang2011multitask,gu2011learning,
thach2011compression,xie2012multi,huy2012feature,jiang2012transfer,zhang2012multi}. 1) In respect of ``what to learn'', in \cite{Teh05sharingclusters}, Teh \textit{et al.} proposed to discover the clusters that can be shared via the hierarchical Dirichlet process. In \cite{jordan2012revisiting}, Kulis and Jordan first revisited a regularized $k$-means algorithm in the view of the Dirichlet process and then extended it to MTC by sharing the clusters of the observations across the tasks. In \cite{dai2008self}, Dai \textit{et al.} extended the information theoretic co-clustering algorithm to MTC by making the tasks share the same feature attribute cluster, where they studied MTC in the \textit{transfer learning} scenario, a special case of MTL that focuses on the performance of one target task. In \cite{gu2009learning,gu2011learning,zhang2010multitask,zhang2011multitask,zhang2012multi,thach2011compression,huy2012feature}, the authors tried to learn a shared feature or kernel representations in different distance metrics, such as Bregman distance. 2) In respect of ``when to learn'', in \cite{zhang2010multitask,zhang2011multitask}, Zhang and Zhang proposed the pairwise task regularization and centralized task regularization methods for discovering the task relationship. 3) However, in respect of ``how to learn'', none of the MTC algorithms can hold the convexity.

Moreover, most of the MTC algorithms belong to the class of the generative clustering. To our best knowledge, the discriminative MTC seems lack of full study. Only in \cite{gu2011learning,jiang2012transfer}, the authors proposed the spectral clustering based MTCs.

\textbf{Contributions:}
In this paper, we propose a new Bayesian Discriminative MTC (DMTC) framework. We implement two DMTC objectives by specifying the framework with four assumptions. The objectives are formulated as difficult Mixed Integer Programming (MIP) problems. We relaxed the MIP problems to two convex optimization problems. The first one, named convex Discriminative Multitask Feature Clustering (DMTFC), can be seen as a technical combination of the convex supervised Multitask Feature Learning (MTFL) \cite{argyriou2008convex} and the Support Vector Regression based Multiclass MMC (SVR-M3C) \cite{zhang2012linearithmic}. The second one, named convex Discriminative Multitask Relationship Clustering (DMTRC), can be seen as a technical combination of the convex Multitask Relationship Learning (MTRL) \cite{zhang2010convex} and SVR-M3C.
These combinations are quite natural and yield the following advantages:
\begin{enumerate}[1)~]
\item In respect of ``what to learn'', DMTFC can learn a shared feature representation between tasks. DMTRC can minimize the model differences of the related tasks. Both algorithms, as discriminative clustering algorithms, try to find the optimal label pattern directly. Both of them work in Frobenius norms under the regularization framework.
    \item In respect of ``when to learn'', DMTRC can learn the task relationship automatically from the data by learning the inter-task covariance matrix.
    \item In respect of ``how to learn'', both algorithms are generated from the Bayesian framework. Both of them are formulated as convex optimization problems, and are solved in a uniform optimization procedure. A number of efficient SVM techniques are available for the problems. In this paper, we employ the cutting-plane algorithm \cite{kelley1960cutting,xu2009extended,yang2011efficient} that has achieved a great success in SVM to solve the DMTCs efficiently.
\end{enumerate}
 Experimental comparison with 7 single task clustering algorithms and 3 state-of-the-art MTCs on the pendigits toy dataset, the multi-domain newsgroups dataset, and the multi-domain sentiment dataset demonstrates the effectiveness of the proposed DMTCs.

The remainder of the paper is organized as follows. In Section \ref{sec:related_work}, we briefly review two related techniques -- the convex MTL and the convex MMC. In Section \ref{sec:prob}, we propose a Baysian framework for DMTC. In Sections \ref{sec:DMTFC} and \ref{sec:DMTRC}, we present the covex DMTFC and DMTRC objectives respectively. In Section \ref{sec:solution}, we solve DMTFC and DMTRC within a uniform optimization procedure. In Section \ref{sec:kernel}, we extend DMTC to nonlinear kernels. In Section \ref{sec:complexity}, we analyze the complexity theoretically. In Section \ref{sec:experiments}, we show the effectiveness of DMTC empirically. Finally, in Section \ref{sec:conclusion}, we conclude this paper and present some future work.

We first introduce some notations here. Bold small letters, e.g., $\w$ and $\bm\alpha$, indicate column vectors. Bold capital letters, e.g., $\W$, $\K$, indicate matrices. Letters in calligraphic bold fonts, e.g., $\mathcal{A}$, $\mathcal{B}$, and $\mathbb{R}$, indicate sets, where $\mathbb{R}^d$ denotes a $d$-dimensional real space. $\mathbf{0}_m$ ($\mathbf{1}_m$) is a vector with all $m$ entries being 1 (0). $\mathbf{I}_{d}$ is a $d\times d$ identity matrix.  The operator $^T$ denotes the transpose. The  $\left< \x,\y\right>$ defines the inner product of $\x$ and $\y$. The operator $\|\cdot\|^m$ denotes the $m$-norm, where $m$ is a constant. The operator ``$\mbox{tr}(\cdot)$'' denotes the trace of matrix. The abbreviation ``s.t.'' is short for ``subject to''. $h(\bm\alpha;\bm\beta)$ denotes a function $h$ with parameters $\bm\alpha$ and $\bm\beta$. The symbol $\{\W_c\}_{c=1}^{C}$ is short for the set $\{\W_1,\ldots,\W_C \}$. Without confusion, we may further write $\{\W_c\}_{c=1}^{C}$ as $\{\W_c\}_{c}$ in equations for simplicity.

\section{Related Work}\label{sec:related_work}

\textbf{Convex Multitask Learning: }
We introduce some related convex MTL \cite{argyriou2008convex,jacob2008clustered,chen2009convex,chen2012convex,zhang2010convex,zhou2011clustered} as follows.

 In \cite{jacob2008clustered}, Jacob and Bach proposed to learn the task relationship by clustering the similar tasks into the same group. Because the embedded clustering problem is non-convex, they relaxed the problem to a convex one. In \cite{zhou2011clustered}, Zhou \textit{et al.} proved that the alternating structure optimization (ASO)\cite{ando2005framework} and the clustered MTL (CMTL)\cite{jacob2008clustered} are equivalent except that ASO operates on the feature dimension of the multitask model but CMTL operates on the task dimension of the model. Observing the equivalence, in \cite{chen2009convex,chen2012convex}, Chen \textit{et al.} proposed a convex ASO that learns a shared feature subspace.

In \cite{argyriou2008convex}, Argyriou \textit{et al.} proposed to minimize the empirical risk of all tasks with a Frobenius norm penalty on the differences of the task-specific models, which is a non-convex optimization problem. Then, they proved that the problem is equivalent to a convex optimization problem -- Multitask Feature Learning (MTFL). In \cite{zhang2010convex}\footnote{Best Paper Award of \textbf{UAI-2010}}, Zhang and Yeung first tried to learn the task covariance matrix of the multivariate Gaussian prior in the regularization framework. Because the concave function with respect to the covariance matrix variable makes the objective non-convex, they further replaced the concave function by two convex constraints, which results in a convex optimization problem, named MTRL.

We found that the relationship between MTFL and MTRL are similar with that between ASO and CMTL. Both MTFL and MTRL can be explained together in the Bayesian framework, which contributes to our motivation on the Bayesian DMTC framework.

 But, to prevent misleading, here, we have to emphasize that convex formulations do not mean absolutely better performance over non-convex ones. How to find good local minima in the non-convex formulations seems not a well explored field in MTL, but is emerging in the study of the regularization frameworks, such as \cite{gong2013general} and the references therein.


\textbf{Convex Maximum Margin Clustering: }Among the numbers of discriminative clustering algorithms, MMC \cite{xu2005maximum,xu2005unsupervised,zhang2009maximum,wang2010mmc,li2009tighter,zhang2012linearithmic},
which is an unsupervised extension of Support Vector Machine (SVM), has received much attention since year 2005.
The key idea of MMC is to find not only the maximum margin hyperplane in the feature space but also the optimal label pattern, such that if an SVM trained on the optimal label pattern, the optimal label pattern will yield the largest margin among all possible label patterns $\{\y|\y=\{y_j\}_{j=1}^{n},\forall y_j \}$, where $n$ is the number of observations and $y_j$ denotes the possible class of the $j$-th observation.
The main difficulty of MMC lies in that it is originally formulated as a difficult Mixed-Integer Programming (MIP) problem \cite{xu2005maximum} due to the integer vector variable $\y$ in the objective of MMC.



  To overcome MIP, researchers either relaxed the objective as convex optimization problems \cite{xu2005maximum,xu2005unsupervised,li2009tighter,zhang2012linearithmic} or reformulated it to non-convex ones \cite{zhang2009maximum,wang2010mmc}. Because the convex relaxation methods achieve better clustering results than non-convex ones in general, we pay particular attention to this kind.

Originally, in \cite{xu2005maximum}, Xu \textit{et al.} proposed to reformulate MMC as a convex semi-definite programming problem by relaxing $\M=\y\y^T$ to a continuous matrix. In \cite{xu2005unsupervised}, they further extended the binary-class MMC to the multiclass scenario which has a time complexity as high as $\O(n^{6.5})$.
Recently, in \cite{zhang2012linearithmic}, Zhang and Wu proposed to construct a convex hull \cite{boyd2004convex} on $\{ \y\}$, and further extended the binary-class algorithm to the multiclass problem, i.e. SVR-M3C, which can be solved in an alternating method in time $\O(n\log n)$.

We found that SVR-M3C and MTFL/MTRL can be combined quite naturally within the proposed DMTC framework, and a number of popular SVM techniques are available for solving the problem efficiently. Therefore, MMC contributes to the implementation of the proposed DMTC framework.

\textbf{Cluster Ensemble: }The most similar work with MTC in machine learning and data mining is \textit{cluster ensemble} \cite{strehl2003cluster,topchy2005clustering,fred2005combining,wang2009bayesian,vega2011survey,gomes2011crowdclustering,yi2012robust,yi2012crowdclustering,wang2011nonparametric}. The cluster ensemble aims to combine multiple clusterings with a so-called \textit{consensus function} for enhancing the stability and accuracy of the base clusterings. The scenario that each base clusterer processes only a part of the observations is called the \textit{observation-distributed scenario} \cite{strehl2003cluster,wang2009bayesian} or \textit{crowdclustering} \cite{gomes2011crowdclustering,yi2012crowdclustering}.  The main difference between MTC and the crowdclustering is that the crowdclustering assumes that all parts of observations are sampled from the same underlying distribution while MTC does not assume so. But, we have to note that several cluster ensemble techniques can be adapted to MTC, such as \cite{wang2009bayesian,gomes2011crowdclustering,yi2012crowdclustering},\cite{wang2011nonparametric}\footnote{Best Student Paper Award of \textbf{SDM-2011}}. Still, to our knowledge, none of the cluster ensembles can both hold convexity and be constructed on discriminative clusterings.

\section{Bayesian Framework of Discriminative Multitask Clustering}\label{sec:prob}
Suppose there are $m$ clustering tasks. The $i$-th task consists of $n_i$ unlabeled observations $\left\{\x_j^i\right\}_{j=1}^{n_i}$, $\x^i_j\in\mathbb{R}^d$. We cluster each task to the same number of classes, denoted as $C$ with $C\ge 2$. The prediction function of the $c$-th class for the $i$-th task is defined as $f^{i}_c(\x^i) = \w_{i,c}^T\x^i$, where $\w_{i,c}$ is the parameter of $f^i_c$, and where we have omitted the bias term $b_{i,c}$ in $f^i_c$ for simplicity. The observation $\x^i$ is assigned to the $c^{\star}$-th class, if $c^{\star} = \arg\max_{c} f^i_{\w_{i,c}}$ holds.
Note that the reason why we assume all tasks have the same number of classes is clarified as follows. 1) In practice, the related tasks tend to share similar structure. 2) We can easily extend this assumption to the scenario that the tasks have different number of classes by extending the prior (Eq. (\ref{eq:prob_feature})) from one-class-versus-one-class correlation to one-class-versus-all-classes correlation. For clarity, we use a more strict assumption.

For a $C$ class clustering problem, the discriminative clustering algorithm models $p\left(y|\x;\{\w_{c}\}_{c=1}^{C}\right)$, where $y\in\{ 1,2,\ldots,C\}$. We further extend $y$ to a $C$ dimensional indicator vector $\bar{\y}$, i.e. $\bar{\y} = \left[\by_1,\ldots,\by_C \right]$, where the label vector $\bar{\y}$ takes 1 for the $k$-th element and $-\frac{1}{C-1}$ for the others when $y=k$.
 For instance, if $\x$ falls into the first class, then $\bar{\y} = [1,-\frac{1}{C-1},\ldots,-\frac{1}{C-1}]$. This coding method is a common strategy in the multiclass problems, such as $k$-means. Note that $\bby$ is a row vector. Here, a set $\B_{\bby}$ is defined for all possible $\bar{\y}$, i.e.
  $\B_{\bby} = \big\{[1,-\frac{1}{C-1},\ldots,-\frac{1}{C-1}], [-\frac{1}{C-1},1,\ldots,-\frac{1}{C-1}],\ldots,$ $[-\frac{1}{C-1},-\frac{1}{C-1},\ldots,1] \big\}$.

 For a $m$-task MTC problem, we denote $\W_c = [\w_{1,c},\ldots,\w_{m,c}]$, $ \mathbf{X}^i=\left[\x_1^i,\ldots,\x_{n_i}^i\right]$, and
$\bbY^i = [(\bby_1^i)^T,\ldots,(\bby_{n_i}^i)^T]^T$.
  We try to optimize $\left\{\W_c\right\}_{c=1}^{C}$ under the Bayesian framework: The \textit{maximum a posteriori} estimation of $\left\{\W_c \right\}_{c=1}^{C}$ is formulated as
  \begin{eqnarray}
 \label{eq:prob2}
&&\max_{ \left\{\W_c \right\}_c, \{\bbY^i\}_i} p\left( \left\{\W_c \right\}_c, \{\bbY^i\}_i\Big| \{\mathbf{X}^i\}_i  \right) \nonumber\\
= && \max_{ \left\{\W_c \right\}_c, \{\bbY^i\}_i} p(\left\{\W_c \right\}_c) p\left( \{\bbY^i\}_i   \Big| \{\mathbf{X}^i\}_i , \left\{\W_c \right\}_c \right).
 \end{eqnarray}
Eq. (\ref{eq:prob2}) contains two parts. The first part $ p(\left\{\W_c \right\}_c)$ is a prior that defines the task relationship. The second part is a discriminative clustering model that covers all tasks. How to specify the prior and the discriminative model is the central problem.

 Now, we make four probabilistic assumptions on problem (\ref{eq:prob2}) for balancing the difficulty of solving DMTC and the effectiveness of DMTC.

 a) \textit{Class evenness assumption.} We assume that the empirical label marginal distribution $p(y)$ in each task is known and distributes evenly. This assumption has been adopted by many discriminative clustering algorithms, such as the class balance constraint assumption in MMC \cite{xu2005maximum,zhang2012linearithmic} and the maximal entropy assumption \cite{gomes2010discriminative}.  We prefer the class balance constraint assumption in \cite{zhang2012linearithmic} since it can simplify the mathematical form of (\ref{eq:prob2}) and is tunable. The constraint set $\mathcal{B}^{i}$ is defined as:
 \begin{eqnarray}
\B^i\triangleq\left\{ \bbY^i \bigg|\bigg\{\begin{array}{l} -\frac{l_{i,c}}{C-1} \le \frac{ \mathbf{1}^T_{n_i}\coly_c^i }{n_i} \le l_{i,c} ,\forall c = 1,\ldots,C, \\
\bby^i_j\in\B_{\bby}, \mbox{ }\forall j = 1,\ldots, n_i.\end{array}\right\}
 \label{eq:m3c_settt}
 \end{eqnarray}
 where $\coly^i_c=[\by^i_{1,c},\ldots,\by^i_{n_i,c}]^T$ denotes the $c$-th column of $\bbY^i$ and  $\{\{ l_{i,c}\}_{c=1}^{C}\}_{i=1}^{m}$ are user defined parameters that control the class balance. The constraint $-\frac{l_{i,c}}{C-1} \le \frac{ \mathbf{1}^T_{n_i}\coly_c^i }{n_i} \le l_{i,c} $ specifies the class evenness of the $c$-th class, while the constraint $\bby^i_j\in\B_{\bby}$ commands that $\Y^i$ must be a legal indicator matrix. This constraint set means that the indicator matrices who violate the constraints have 0 probability to appear, while the matrices who obey the constraints have an equal chance to appear.
As will be shown in the experimental section, a correct class balance assumption is very important to the success of DMTC. It not only can help DMTC detect a reasonable label pattern but also can prevent the interference of the outliers. If we know the class distribution, we can set $l_{l,c}$ to a value that is around ${ \mathbf{1}_{n_i}^T{{\y^{\star}}}_c^i }/{n_i}$ where ${{\y^{\star}}}_c^i$ is the $c$-th column of the ground truth label matrix of the $i$-th task, otherwise, we can just set all $l_{l,c}$ to the same empirical value.

b) \textit{Multivariate Gaussian prior assumption.} The prior defines what to share in MTC. In this paper, we follow Zhang and Yeung's formulation \cite[equation 2]{zhang2010convex} for the multivariate Gaussian prior.
 \begin{eqnarray}
 \label{eq:prob_feature}
 p(\{\W_c\}_c) \propto  \prod_{c=1}^{C} \left(q(\W_c) \prod_{i=1}^{m} \mathcal{N}\left( \w_{i,c} |  \mathbf{0}_{d},\sigma_1^2\mathbf{I}_{d} \right)\right)
 \end{eqnarray}
 where $\mathcal{N}\left(   \mathbf{A},\mathbf{B} \right)$ is a multivariate normal distribution with $\mathbf{A}$ and $\mathbf{B} $ as the mean and covariance matrix respectively, and $q(\W_c)$ is a distribution that the rows or columns of $\W_c$ are independent Gaussians. See (\ref{eq:prob_qf}) and (\ref{eq:prob5}) below for the definition of $\W_c$. As will be shown later, $\mathcal{N}\left( \w_{i,c} |  \mathbf{0}_{d},\sigma_1^2\mathbf{I}_{d} \right)$ plays a regularization role on the task-specific model $\w_{i,c},i = 1,\ldots,m$. Note that restricting all tasks have the same covariance $\sigma_1^2\mathbf{I}_{d}$ might be too tight. In practice, we can use different covariances for different tasks.

   In this paper, we consider two kinds of $q(\W_c)$. The first kind 
defines a shared feature representation:
   \begin{eqnarray}
 \label{eq:prob_qf}
q_f(\W_c) =\frac{\exp\left( -\frac{1}{2}\mbox{tr}(\W_c^T \bm\DD^{-1}\W_c) \right)}{(2\pi)^{md/2}|\DD|^{d/2}}
 \end{eqnarray}
 where $\DD$ is a covariance matrix that models the relationships between the features.
 The second kind follows Zhang and Yeung's formulation \cite[equation 2]{zhang2010convex}, which defines the relationship between the tasks:
  \begin{eqnarray}
 \label{eq:prob5}
q_r(\W_c) =\frac{\exp\left( -\frac{1}{2}\mbox{tr}(\W_c\bm\Omega^{-1} \W_c^T) \right)}{(2\pi)^{md/2}|\bm\Omega|^{m/2}}
 \end{eqnarray}
 where $\mathbf{\bm\Omega}$ is the covariance matrix that models the relationships between the task-specific models $\w_{i,c}$.

c) \textit{Task independence assumption.} We assume that when $\left\{\W_c \right\}_c$ is sampled from the prior distribution, the tasks are mutually independent:
  \begin{eqnarray}
 \label{eq:prob3}
&& p\left( \{\bbY^i\}_i   \Big| \{\mathbf{X}^i\}_i , \left\{\W_c \right\}_c \right)  =\prod_{i=1}^{m}p\left( \bbY^i | \mathbf{X}^i, \{\w^i_c\}_{c} \right)\nonumber\\
=&& \prod_{i=1}^{m}\prod_{c=1}^{C}p\left( \coly^i_c | \mathbf{X}^i, \w^i_c \right)
= \prod_{i=1}^{m}\prod_{j=1}^{n_i}\prod_{c=1}^{C}p\left( \by^i_{j,c} | \mathbf{x}^i_j, \w^i_c \right).
 \end{eqnarray}
With this assumption, we can incorporate any advanced binary-class discriminative clustering algorithm into $p\left( \coly^i_c | \mathbf{X}^i, \w^i_c \right)$ without modifying the clustering algorithm significantly.

d) \textit{Gaussian assumption on the discriminative clustering model.} We assume $p\left( \by^i_{j,c} | \mathbf{x}^i_j, \w^i_c \right)$ in (\ref{eq:prob3}) is Gaussian:
  \begin{eqnarray}
 \label{eq:prob_uu}
p\left( \by^i_{j,c} | \mathbf{x}^i_j, \w_{i,c} \right) = \mathcal{N}\left( \by^i_{j,c}  | \w_{i,c}^T\mathbf{x}^i_j, \sigma_2^2 \right).
 \end{eqnarray}
 This assumption makes the discriminative clustering a regression problem but not a classification problem, which might not be the real case since $\by^i_{j,c} \in\{-\frac{1}{C-1},1\}$ is a discrete variable. However, it is known that even in the supervised classification problem, if we set problem (\ref{eq:prob3}) with a non-Gaussian likelihood, the computations of predictions are analytically intractable \cite[page 39]{rasmussen2006gaussian}. Moreover, the regression based classifiers have been widely adopted, such as least-squares SVM. 

\section{Convex Discriminative Multitask Feature Clustering}\label{sec:DMTFC}
In this section, we will introduce the convex objective function of the proposed DMTFC.

Substituting Eqs. (\ref{eq:m3c_settt})-(\ref{eq:prob_qf}), (\ref{eq:prob3}) and (\ref{eq:prob_uu}) into problem (\ref{eq:prob2}) and taking the negative logarithm of (\ref{eq:prob2}) can derive the following objective function:
\begin{eqnarray}
 \label{eq:obj_dftfc}
\min_{\left\{\Y^i\in\mathcal{B}^i\right\}_{i=1}^{m}}  && \min_{\left\{\W_{c}\right\}_{c=1}^{C}}\min_{\DD} \sum_{c=1}^{C}\bigg(\frac{\lambda_1}{2}\mbox{tr}\left(\W_{c}^T\W_{c}\right)\nonumber\\
&& + \frac{\lambda_2}{2}  \mbox{tr}\left(\W_{c}^T\DD^{-1} \W_{c}\right)  + \frac{d\lambda_2}{2}\ln |\DD| \nonumber\\
&& + \sum_{i=1}^{m}\frac{1}{n_i}\sum_{j=1}^{n_i}\left( \by_{j,c}^i -\w_{i,c}^T\x_j^i\right)^2\bigg)
 \end{eqnarray}
 where $\lambda_1 $ and $\lambda_2 $ are two tunable regularization parameters that are related to $\sigma_1$ and $\sigma_2$.

Problem (\ref{eq:obj_dftfc}) is an \textit{NP}-complete mixed integer matrix optimization problem. First, $\Y^i$ is an integer matrix variable, which will cause the problem \textit{NP}-complete. Second, even if $\Y^i$ is known, problem (\ref{eq:obj_dftfc}) is still a minimization of a non-convex function, since $\ln |\DD|$ is a concave function. In this section, we will relax (\ref{eq:obj_dftfc}) to a convex optimization problem that should be convex with respect to both the objective function and the constraints \cite{boyd2004convex}.

In respect of the objective function, we replace $\ln |\DD|$ by the following convex constraint set:
 \begin{eqnarray}
  \label{eq:obj_D}
\D = \{ \DD | \DD\in \mathbb{R}^{d\times d},\DD \succeq 0,\tr(\DD) = 1\}
 \end{eqnarray}
 which results in the following MIP problem:
   \begin{eqnarray}
 \label{eq:obj_dftfc2}
&&\min_{\left\{\Y^i\in\mathcal{B}^i\right\}_{i=1}^{m}}   \min_{\left\{\W_{c}\right\}_{c=1}^{C}}\min_{\DD\in\D} \sum_{c=1}^{C}\bigg( \frac{\lambda_2}{2}  \mbox{tr}\left(\W_{c}^T\DD^{-1} \W_{c}\right)   \nonumber\\
&&   +  \frac{\lambda_1}{2}\mbox{tr}\left(\W_{c}^T\W_{c}\right)  +   \sum_{i=1}^m\frac{1}{n_i}\sum_{j=1}^{n_i}\left( \by_{j,c}^i -\w_{i,c}^T\x_j^i\right)^2  \bigg).
 \end{eqnarray}
 We can see that problem (\ref{eq:obj_dftfc2}) is quite similar with \cite[Theorem 1]{argyriou2008convex} except that (\ref{eq:obj_dftfc2}) is a regularized multiclass problem with label $\Y^i$ as an integer matrix variable.

 In respect of the constraints, we will construct a convex hull \cite{boyd2004convex} on $\mathcal{B}^i$ as in \cite{li2009tighter,zhang2012linearithmic}. Specifically,
 fixing $\{\Y^i\}_{i=1}^{m}$ and $\DD$, problem (\ref{eq:obj_dftfc2}) is formulated as:
  \begin{eqnarray}
 \label{eq:obj_dftfc3}
  \sum_{c=1}^{C} \Bigg(\min_{\W_{c}} && \frac{\lambda_1}{2}\mbox{tr}\left(\W_{c}^T\W_{c}\right)  + \frac{\lambda_2}{2}  \mbox{tr}\left(\W_{c}^T\DD^{-1} \W_{c}\right)    \nonumber\\
&&\qquad+  \sum_{i=1}^m\frac{1}{n_i}\sum_{j=1}^{n_i}\left( \by_{j,c}^i -\w_{i,c}^T\x_j^i\right)^2 \Bigg)
 \end{eqnarray}
 where the problems in the big brackets are mutually independent. We rewrite the problem in the big brackets in the constrained form as follows:
  \begin{eqnarray}
 \label{eq:obj_whatwhat}
 \min_{\W_{i}} && \frac{\lambda_1}{2}\mbox{tr}\left(\W_{c}^T\W_{c}\right) + \frac{\lambda_2}{2}  \mbox{tr}\left(\W_{c}^T\DD^{-1} \W_{c}\right)\nonumber\\
  &&\qquad +  \sum_{i=1}^m\frac{1}{n_i}\sum_{j=1}^{n_i}\left(\xi^i_{j,c}\right)^2 \\
\mbox{s.t. }&& \by_{j,c}^i  -\w_{i,c}^T\x_j^i =\xi^i_{j,c}, \forall i = 1,\ldots,m,\forall j = 1,\ldots,n_i.\nonumber
 \end{eqnarray}
  According to the Karush-Kuhn-Tucker conditions, the dual form of problem (\ref{eq:obj_whatwhat}) can be written as:
  \begin{eqnarray}
 \label{eq:obj_dftfc4}
\max_{\pmb\alpha_c}\sum_{j=1}^{n_i}\alpha_{j,c}^i \by_{j,c}^{i} -\frac{1}{2}{\bm\alpha_c}^T\wK_{\scriptsize\mbox{F}}\bm\alpha_c
 \end{eqnarray}
where $\bm\alpha_c = [\alpha^1_{1,c},\ldots,\alpha^{m}_{n_m,c}]^T$ are the dual variables, $\wK_{\scriptsize\mbox{F}} = \K_{\scriptsize\mbox{F}} + \frac{1}{2}\bm\Lambda$ with $\bm\Lambda$ as the diagonal matrix whose diagonal element equals to $n_i$ if the corresponding observation belongs to the $i$-th task, and $\K_{\scriptsize\mbox{F}}$ denoted as the multitask-kernel matrix for feature learning which is defined as:
     \begin{equation}
 \label{eq:obj_KMTF}
K_{\scriptsize\mbox{F}}\left(\x^{i_1}_{j_1},\x^{i_2}_{j_2}\right) = {\x^{i_1}_{j_1}}^T\DD(\lambda_1\DD+\lambda_2\I_d)^{-1}{\x^{i_2}_{j_2}}\left<\mathbf{e}_{i_1},\mathbf{e}_{i_2} \right>.
 \end{equation}
 $\W_{c}$ is obtained as:
   \begin{eqnarray}
 \label{eq:obj_WC}
\W_{c} =\sum_{i=1}^m\sum_{j=1}^{n_i}\alpha^i_{j,c}\DD\left( \lambda_1\DD+ \lambda_2\I_d \right)^{-1}\x^i_j\mathbf{e}_{i}^T.
 \end{eqnarray}
 where $\mathbf{e}_{i}$ represents the $i$-th column of the identity matrix.
 Substituting (\ref{eq:obj_dftfc4}) back to problem (\ref{eq:obj_dftfc3}) and then substituting (\ref{eq:obj_dftfc3}) back to problem (\ref{eq:obj_dftfc2}) can get an equivalent optimization problem of (\ref{eq:obj_dftfc3}) as follows:
   \begin{equation}
 \label{eq:obj_dftfc6}
\min_{\left\{\Y^i\in\mathcal{B}^i\right\}_{i=1}^{m}}\min_{\DD\in\D}\max_{\{\pmb\alpha_c\}_{c=1}^{C}} \sum_{i,c,j}\alpha_{j,c}^i \by_{j,c}^{i} -\frac{1}{2}\sum_{c=1}^C\bm\alpha_c^T\wK_{\scriptsize\mbox{F}}\bm\alpha_c.
 \end{equation}
 Because the second term of problem (\ref{eq:obj_dftfc6}) is irrelevant to the integer matrix variable $\Y^i$,
it is easy to see that the following problem learns a lower bound of problem (\ref{eq:obj_dftfc6}):
     \begin{eqnarray}
 \label{eq:obj_dftfc7}
&& \min_{\DD\in\D}\max_{\{\pmb\alpha_c\}_{c=1}^{C}}     \bigg\{ \max_{\{\theta_i\}_{i=1}^m} \sum_{i=1}^m\theta_i -\frac{1}{2}\sum_{c=1}^C\bm\alpha_c^T\wK_{\scriptsize\mbox{F}}\bm\alpha_c \\
&&\mbox{s.t.}    \theta_i \leq \sum_{c=1}^C\sum_{j=1}^{n_i}\alpha_{j,c}^i \by_{j,c}^{i} , \forall i=1,\dots,m, \forall k: \Y^{i}_k \in \B^i  \bigg\}.\nonumber
 \end{eqnarray}
Reformulating the problem in the braces of (\ref{eq:obj_dftfc7}) to its dual can get the following equivalent problem:
      \begin{eqnarray}
 \label{eq:obj_dftfc8}
 \min_{\DD\in\D}  && \max_{\{\pmb\alpha_c\}_{c=1}^{C}}\min_{\left\{\bm\mu^i\in\mathcal{M}^i\right\}_{i=1}^{m}}
-\frac{1}{2}\sum_{c=1}^{C}\bm\alpha_c^T\wK_{\scriptsize\mbox{F}}\bm\alpha_c\nonumber\\
&& + \sum_{i=1}^m\sum_{c=1}^C\sum_{j=1}^{n_i}\alpha_{j,c}^i \sum_{k:\Y^{i}_k \in \B^i}\mu^i_k\by^i_{k,j,c}
 \end{eqnarray}
where $\by^i_{k,j,c}$ is the element of $\Y^i_k$ at the $j$-th row and $c$-th column, and $\mathcal{M}^i$ is defined as $\mathcal{M}^i = \left\{\bm\mu^i| 0\le\mu^i_k\le1,\sum_{k :\Y_{k}^{i}\in\B^i}\mu^i_k = 1 \right\}$. If we denote $\wB^i = \left\{ \wY^i \Big| \wY^i = \sum_{k: \Y_{k}^{i}\in\B^i}\mu^i_k\Y^i_k, \bm\mu^i\in\mathcal{M}^i \right\}$, according to \cite[page 24]{boyd2004convex}, $\wB^i$ is the convex hull of $\B^i$ which is the tightest convex relaxation of $\B^i$. Note that the optimization order of $\{\bm\mu,\DD,\bm\alpha\}$ is exchangeable.

Writing the objective function in (\ref{eq:obj_dftfc8}) back to its primal form can derive the following equivalent convex optimization problem:
  \begin{eqnarray}
 \label{eq:obj_dftfc9}
&&\min_{\left\{\pmb\mu^i\in\mathcal{M}^i\right\}_{i=1}^{m}}     \min_{\left\{\W_{c}\right\}_{c=1}^{C}}\min_{\DD\in\D}
  \sum_{c=1}^{C}\Bigg(  \frac{\lambda_1}{2}\mbox{tr}\left(\W_{c}^T\W_{c}\right)  \nonumber\\
&&\quad+  \frac{\lambda_2}{2}\mbox{tr}\left(\W_{c}^T\DD^{-1} \W_{c}\right)\nonumber\\
&&\quad+ \sum_{i=1}^m\frac{1}{n_i}\sum_{j=1}^{n_i}\Bigg( \sum_{k:\Y_{k}^{i}\in\B^i}\mu^i_k\by_{k,j,c}^i -\w_{i,c}^T\x_j^i\Bigg)^2   \Bigg).
 \end{eqnarray}

\begin{thm}
  Problem (\ref{eq:obj_dftfc9}) is convex with respect to $\left\{\bm\mu^i\right\}_{i=1}^{m}$, $\left\{\W_c\right\}_{c=1}^{C}$, and $\DD$.
\end{thm}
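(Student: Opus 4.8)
The plan is to use the standard criterion that a minimization problem is convex when its objective is jointly convex in all decision variables and its feasible region is a convex set, and to verify these two facts separately. The decision variables here are $\{\bm\mu^i\}_{i=1}^m$, $\{\W_c\}_{c=1}^C$, and $\DD$. I would write the objective as a sum over $c$ of three pieces and argue convexity piece by piece, relying on the facts that a finite sum of jointly convex functions is jointly convex, and that a function convex in a subset of the variables and independent of the rest is jointly convex in all of them.

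First I would dispose of the two easy terms. The penalty $\frac{\lambda_1}{2}\tr(\W_c^T\W_c)=\frac{\lambda_1}{2}\|\W_c\|_F^2$ is a scaled squared Frobenius norm, hence convex in $\W_c$. The data-fitting term $\sum_{i=1}^m\frac{1}{n_i}\sum_{j=1}^{n_i}\big(\sum_{k}\mu_k^i\by_{k,j,c}^i-\w_{i,c}^T\x_j^i\big)^2$ is a nonnegative weighted sum of squares of expressions that are jointly affine in $(\bm\mu^i,\w_{i,c})$; since $t\mapsto t^2$ is convex and its composition with an affine map is convex, each square is jointly convex, and so is the whole sum.

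The crux, and the step I expect to be the main obstacle, is the coupling term $\frac{\lambda_2}{2}\tr(\W_c^T\DD^{-1}\W_c)$, which is nonlinear in $\DD$ and entangles $\DD$ with $\W_c$. I would prove joint convexity on $\{(\W_c,\DD):\DD\succ 0\}$ by reducing to the matrix-fractional function: writing $\W_c=[\w_{1,c},\dots,\w_{m,c}]$ columnwise gives $\tr(\W_c^T\DD^{-1}\W_c)=\sum_{i=1}^m\w_{i,c}^T\DD^{-1}\w_{i,c}$, so it suffices to show each map $(\w,\DD)\mapsto\w^T\DD^{-1}\w$ is jointly convex. I would establish this via the Schur-complement description of its epigraph, namely $\w^T\DD^{-1}\w\le t\iff\left[\begin{smallmatrix}\DD & \w\\ \w^T & t\end{smallmatrix}\right]\succeq 0$ for $\DD\succ 0$, whose right-hand side is the preimage of the convex positive-semidefinite cone under an affine map and is therefore convex; equivalently, the variational identity $\w^T\DD^{-1}\w=\sup_{\z}\big(2\z^T\w-\z^T\DD\z\big)$ realizes the function as a pointwise supremum of maps jointly affine in $(\w,\DD)$, which is convex. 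Either route is standard (cf. \cite{boyd2004convex}). Summing over $i$ and $c$ then preserves joint convexity, and adding the two easy terms shows the full objective is jointly convex.

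It remains to check the feasible set. Each $\mathcal{M}^i=\{\bm\mu^i\mid 0\le\mu_k^i\le 1,\ \sum_k\mu_k^i=1\}$ is a probability simplex and hence convex; the $\W_c$ range over the convex space $\mathbb{R}^{d\times m}$; and $\D=\{\DD\mid\DD\succeq 0,\ \tr(\DD)=1\}$ is the intersection of the positive-semidefinite cone with an affine hyperplane, hence convex. The total feasible region is the Cartesian product of these convex sets and is therefore convex. The only delicacy is that $\D$ allows $\DD\succeq 0$ while the fractional term requires $\DD\succ 0$; I would note that the matrix-fractional function extends to the closure as a convex function (set to $+\infty$ off the range of $\DD$), so convexity survives on the boundary of $\D$. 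Combining the jointly convex objective with the convex feasible region establishes the claim.
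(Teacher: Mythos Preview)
Your proof is correct and follows essentially the same structure as the paper's: verify convexity of the feasible set as a Cartesian product of convex sets, and verify joint convexity of the objective term by term. The only noteworthy difference is that where the paper simply cites \cite{argyriou2008convex} for the joint convexity of $\tr(\W_c^T\DD^{-1}\W_c)$, you supply a self-contained argument via the Schur-complement (or variational) characterization of the matrix-fractional function, and you additionally address the boundary case $\DD\succeq 0$ that the paper glosses over.
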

\begin{proof}
Because $\left\{\mathcal{M}^i\right\}_{i=1}^{m}$, $\left\{\mathbb{R}^{d\times m}\right\}_{c=1}^{C}$ and $\D$ are all convex sets, their Cartesian product $ \mathcal{M}^1\times\ldots\times  \mathcal{M}^m\times  \mathbb{R}^{d\times m},\ldots,\mathbb{R}^{d\times m}\times \D$, i.e. the constraint, is also convex {\cite[page 38]{boyd2004convex}}, where $n=\sum_{i}n_i$.
   It is easy to see that the first and third terms of the objective function are convex by verifying that their \textit{Hessian} matrices are positive semidefinite \cite[page 71]{boyd2004convex}. The second term has been proved to be convex in \cite{argyriou2008convex}. Because the summation operation can preserve convexity, the objective function is convex. Therefore, problem (\ref{eq:obj_dftfc9}) is jointly convex with respect to all variables.
\end{proof}

Summarizing the aforementioned, problem (\ref{eq:obj_dftfc9}) is a convex relaxation of the original problem (\ref{eq:obj_dftfc}).  It has two equivalent forms (\ref{eq:obj_dftfc7}) and (\ref{eq:obj_dftfc8}). Problem (\ref{eq:obj_dftfc7}) is the objective function of DMTFC.

\section{Convex Discriminative Multitask Relationship Clustering}\label{sec:DMTRC}
In this section, we will introduce the convex objective function of the proposed DMTRC.

Substituting Eqs. (\ref{eq:m3c_settt}), (\ref{eq:prob_feature}), and (\ref{eq:prob5})-(\ref{eq:prob_uu}) into problem (\ref{eq:prob2}) and taking the negative logarithm of (\ref{eq:prob2}) can derive the following objective function:
\begin{eqnarray}
 \label{eq:obj_relationl}
\min_{\left\{\Y^i\in\mathcal{B}^i\right\}_{i=1}^{m}}   && \min_{\left\{\W_{c}\right\}_{c=1}^{C}}\min_{\pmb\Omega}  \sum_{c=1}^{C}\bigg( \frac{\lambda_1}{2}\mbox{tr}\left(\W_{c}\W_{c}^T\right) \nonumber\\
&&   + \frac{\lambda_2}{2}  \mbox{tr}\left(\W_{c}\bm\Omega^{-1} \W_{c}^T\right)   + \frac{m\lambda_2}{2}\ln |\bm\Omega|\nonumber\\
&& +\sum_{i=1}^{m}\frac{1}{n_i}\sum_{j=1}^{n_i}\left( \by_{j,c}^i -\w_{i,c}^T\x_j^i\right)^2    \bigg).
 \end{eqnarray}
 We can see that problem (\ref{eq:obj_relationl}) seems quite similar with problem (\ref{eq:obj_dftfc}) except that $\w_{i,c}$ and $\DD$ in (\ref{eq:obj_dftfc}) is replaced by $\w_{i,c}^T$ and $\bm\Omega$ respectively. However, essentially, what they learn is quite different.
  We can also observe that problem (\ref{eq:obj_relationl}) seems quite similar with \cite[equation 5]{zhang2010convex} except that (\ref{eq:obj_relationl}) is a multiclass problem and $\Y^i$ is an integer matrix variable. But this difference makes (\ref{eq:obj_relationl}) a hard MIP problem.
Observing the factors that cause problem (\ref{eq:obj_dftfc}) and problem (\ref{eq:obj_relationl}) non-convex are the same, we can use a similar convex relaxation procedure with (\ref{eq:obj_dftfc})'s for (\ref{eq:obj_relationl}). For the length limitation of the paper, we only report the main results.

The relaxed convex optimization problem of problem (\ref{eq:obj_relationl}) is formulated formally as follows:
  \begin{eqnarray}
 \label{eq:objxx}
&&\min_{\{\pmb\mu^i\in\mathcal{M}^i\}_{i=1}^{m}}    \min_{\{\W_{i}\}_{c=1}^{C}}\min_{\pmb\Omega\in\A}
 \sum_{c=1}^{C}\Bigg(   \frac{\lambda_1}{2}\mbox{tr}\left(\W_{c}\W_{c}^T\right)  \nonumber\\
&&\quad+ \frac{\lambda_2}{2}\mbox{tr}\left(\W_{c}\bm\Omega^{-1} \W_{c}^T\right) \nonumber\\
 && \quad+ \sum_{i=1}^m\frac{1}{n_i}\sum_{j=1}^{n_i}\Bigg( \sum_{k:\Y_{k}^{i}\in\B^i}\mu^i_k\by_{k,j,c}^i -\w_{i,c}^T\x_j^i\Bigg)^2   \Bigg).
 \end{eqnarray}
  where $\A$ is a convex constraint set defined as:
 \begin{eqnarray}
  \label{eq:obj_Omega}
\A = \{ \bm\Omega | \bm\Omega\in \mathbb{R}^{m\times m},\bm\Omega \succeq 0,\mathbf{tr}(\bm\Omega) = 1\}.
 \end{eqnarray}
  The proof of the convexity of problem (\ref{eq:objxx}) is similar with the proof of Theorem 1.
Problem (\ref{eq:objxx}) has two equivalent forms. The first one is written as:
     \begin{eqnarray}
 \label{eq:obj8}
 &&\min_{\pmb\Omega\in\mathcal{A}}\max_{\{\pmb\alpha_c\}_{c=1}^{C}}    \bigg\{ \max_{\{\theta_i\}_{i=1}^m} \sum_{i=1}^m\theta_i -\frac{1}{2}\sum_{c=1}^C\bm\alpha_c^T\wK_{\scriptsize\mbox{R}}\bm\alpha_c \\
\mbox{s.t.}   && \theta_i \leq \sum_{c=1}^C\sum_{j=1}^{n_i}\alpha_{j,c}^i \by_{j,c}^{i},\forall i = 1,\ldots,m,  \forall k: \Y^{i}_k \in \B^i  \bigg\}.\nonumber
 \end{eqnarray}
where $\wK_{\scriptsize\mbox{R}} = \K_{\scriptsize\mbox{R}} + \frac{1}{2}\bm\Lambda$ with $\K_{\scriptsize\mbox{R}}$ denoted as the multitask-kernel matrix for relationship learning which is defined as \cite{zhang2010convex}:
     \begin{equation}
 \label{eq:obj_KMTR}
K_{\scriptsize\mbox{R}}(\x^{i_1}_{j_1},\x^{i_2}_{j_2}) = \e_{i_1}^T\bm\Omega(\lambda_1\bm\Omega+\lambda_2\I_m)^{-1}\e_{i_2}\left<\x^{i_1}_{j_1},\x^{i_2}_{j_2}\right>.
 \end{equation}
 We also obtain $\W_c$ as:
   \begin{eqnarray}
 \label{eq:obj_WC2}
\W_c =\sum_{i=1}^{m}\sum_{j=1}^{n_i}\alpha^i_{j,c}\x^i_j\e_i^T \bm\Omega \left( \lambda_1\bm\Omega+ \lambda_2\I_m \right)^{-1}.
 \end{eqnarray}
The second equivalent form is written as:
      \begin{eqnarray}
 \label{eq:obj9}
 \min_{\pmb\Omega\in\mathcal{A}}  && \max_{\{\pmb\alpha_c\}_{c=1}^{C}}\min_{\left\{\bm\mu^i\in\mathcal{M}^i\right\}_{i=1}^{m}}
-\frac{1}{2}\sum_{c=1}^{C}\bm\alpha_c^T\wK_{\scriptsize\mbox{R}}\bm\alpha_c\nonumber\\
&& + \sum_{i=1}^m\sum_{c=1}^C\sum_{j=1}^{n_i}\alpha_{j,c}^i \sum_{k:\Y^{i}_k \in \B^i}\mu^i_k\by^i_{k,j,c}
 \end{eqnarray}

Summarizing the aforementioned, problem (\ref{eq:objxx}) is a convex relaxation of the original problem (\ref{eq:obj_relationl}). It has two equivalent forms (\ref{eq:objxx}) and (\ref{eq:obj9}). Problem (\ref{eq:obj8}) is the objective function of DMTRC.

\section{Optimization Algorithm}\label{sec:solution}
In this section, we are to solve DMTFC (\ref{eq:obj_dftfc7}) and DMTRC (\ref{eq:obj8}) in a uniform framework. This framework utilizes the fact that there are only two different points between them: 1) the multitask kernel functions are different, see Eqs. (\ref{eq:obj_KMTF}) and (\ref{eq:obj_KMTR}); 2) the convex sets $\D$ and $\A$ are different, see Eqs. (\ref{eq:obj_D}) and (\ref{eq:obj_Omega}). To facilitate the mathematical representation, we write (\ref{eq:obj_dftfc7}) and (\ref{eq:obj8}) as the following uniform objective:
      \begin{eqnarray}
 \label{eq:objzzz}
  &&\max_{\{\pmb\alpha_c\}_{c=1}^{C}} \min_{\ZZ\in\Z}   \max_{\{\theta_i\}_{i=1}^m} \sum_{i=1}^m\theta_i -\frac{1}{2}\sum_{c=1}^C\bm\alpha_c^T\wK\bm\alpha_c \\
&&\mbox{s.t.}    \theta_i \leq \sum_{c=1}^C\sum_{j=1}^{n_i}\alpha_{j,c}^i \by_{j,c}^{i} , \forall i=1,\dots,m, \forall k: \Y^{i}_k \in \B^i  .\nonumber
 \end{eqnarray}
 where $\ZZ$ stands for $\DD$ in (\ref{eq:obj_dftfc7}) or $\bm\Omega$ in (\ref{eq:obj8}), $\Z$ stands for $\D$ in (\ref{eq:obj_dftfc7}) or $\A$ in (\ref{eq:obj8}), and $\wK$ stands for $\wK_{\scriptsize\mbox{F}}$ in (\ref{eq:obj_KMTF}) or $\wK_{\scriptsize\mbox{R}}$ in (\ref{eq:obj_KMTR}).

Due to the length limitation of the paper, we present the optimization algorithm briefly as follows, leaving the detailed derivation in the supplemental material which is available at \url{http://sites.google.com/site/zhangxiaolei321/}.

 The solution framework is an alternating method.
  First, it decomposes the unsupervised problem (\ref{eq:objzzz}) to a serial supervised multiclass MTL problem by the cutting-plane algorithm (CPA) \cite{kelley1960cutting} and the extended level method (ELM) \cite{xu2009extended,yang2011efficient}, where the decomposition algorithm can be seen as a multitask extension of the SVR-M3C algorithm \cite{zhang2012linearithmic}. Then, it solves each supervised multiclass MTL problem in an alternating way, which decomposes the multiclass MTL to a serial supervised single-task regression problems eventually. Note that the difference of the optimization procedure between DMTFC and DMTRC only appears in the supervised learning in Section \ref{subsubsection:3}.

 \subsection{Optimizing (\ref{eq:objzzz}) Via Cutting-plane Algorithm}\label{subsec:cpa}

Because the number of the constraints in problem (\ref{eq:objzzz}) is exponential large with respect to $n$, directly optimizing (\ref{eq:objzzz}) is impossible when the data set contains over dozens of examples. Hence, we adopt CPA \cite{kelley1960cutting} to solve it approximately. CPA iterates the following two steps. The first step is to solve the following reduced cutting plane subproblem:
      \begin{eqnarray}
 \label{eq:obj13}
  &&\max_{\{\pmb\alpha_c\}_{c=1}^C} \min_{\ZZ\in\Z} \max_{\{\theta_i\}_{i=1}^m} \sum_{i=1}^m\theta_i -\frac{1}{2}\sum_{c=1}^C\bm\alpha_c^T\wK\bm\alpha_c \\
&&\mbox{s.t.}    \theta_i \leq \sum_{c=1}^C\sum_{j=1}^{n_i}\alpha_{j,c}^i \by_{j,c}^{i} , \forall i=1,\dots,m, \forall k: \Y^{i}_k \in \cY^i  \bigg\}.\nonumber
 \end{eqnarray}
      where $\cY^i\subset\B^i$ represents the pool of the most violated constraints,  The second step is to calculate the most violated constraint, denoted as $\{\Y_{|\cY^i|+1}^i\}_{i=1}^m$, by solving the following integer matrix optimization problem
\begin{equation}\label{eq:violated}
\min_{\Y_{|\cY^i|+1}^i}\sum_{c=1}^C\sum_{j=1}^{n_i}\alpha_{j,c}^i \by_{|\cY^i|+1,j,c}^{i}\mbox{ },\mbox{ } \forall i = 1,\ldots,m,
\end{equation}
 and then add $\Y_{|\cY^i|+1}^i$ to $\cY^i,\forall i = 1,\ldots,m$, respectively.
Thanks to the constraints on $\Y^i$ (defined in $\B^i$, i.e. Eq. (\ref{eq:m3c_settt})), the problem can be solved in time $\O\left(\sum_{i=1}^{m}Cn_i\log (Cn_i)\right)$, see \cite[Algorithm 6]{zhang2012linearithmic} for the algorithm.

\subsection{Optimizing (\ref{eq:obj13}) Via Extended Level Method}\label{subsec:elm}
Like the full problem (\ref{eq:objzzz}), the cutting-plane subproblem (\ref{eq:obj13}) also has an equivalent form:
      \begin{eqnarray}
 \label{eq:objxxxx15}
  \max_{\{\pmb\alpha_c\}_{c=1}^C} &&\min_{\ZZ\in\Z} \min_{\{\pmb\mu^i\in\mathcal{M}_{\cY}^i\}_{i=1}^m} -\frac{1}{2}\sum_{c=1}^{C}\bm\alpha_c^T\wK\bm\alpha_c\nonumber\\
&& + \sum_{i=1}^m\sum_{c=1}^C\sum_{j=1}^{n_i}\alpha_{j,c}^i \sum_{k:\Y^{i}_k \in \cY^i}\mu^i_k\by^i_{k,j,c}
 \end{eqnarray}
where $\mathcal{M}_{\cY}^i = \left\{\bm\mu^i| 0\le\mu^i_k\le1,\sum_{k =1}^{|\cY^i|}\mu^i_k = 1 \right\}$.

 Problem (\ref{eq:objxxxx15}) is a concave-convex optimization problem that is convex on $\bm\mu$ and $\ZZ$ and concave on $\bm\alpha$. We will optimize it via ELM \cite{xu2009extended} which is an efficient alternating method that aims to find the saddle point of the problem. ELM iterates the following two steps until convergence. The first step is to optimize $\{\bm\mu^i\}_{i=1}^m$ given fixed $\{\bm\alpha\}_{c=1}^C$ and $\ZZ$ by constructing a cutting-plane model on the problem. See the supplement for this complicated cutting-plane model. The second step is to optimize $\{\bm\alpha\}_{c=1}^C$ and $\ZZ$ together given fixed $\{\bm\mu^i\}_{i=1}^m$, which is formulated as follows:
 \begin{eqnarray}
    \label{eqn:xxxx}
    \min_{\ZZ\in\Z}\max_{\{\pmb\alpha_c\}_{c=1}^C}&\mbox{ }&-\frac{1}{2}\sum_{c=1}^{C}\bm\alpha_c^T\wK\bm\alpha_c\nonumber\\
&& + \sum_{i=1}^m\sum_{c=1}^C\sum_{j=1}^{n_i}\alpha_{j,c}^i \sum_{k:\Y^{i}_k \in \cY^i}\mu^i_k\by^i_{k,j,c}
 \end{eqnarray}

 Note that problem (\ref{eqn:xxxx}) is the dual form of a supervised MTL problem. The reason why we solve DMTC in the dual form but not primal form is because that we need the Lagrange parameter $\bm\alpha$ to solve problem (\ref{eq:violated}) but not only for introducing the nonlinear kernels.

 \subsection{Optimizing (\ref{eqn:xxxx}) Via Alternating Method}\label{subsubsection:3}

We adopt an alternating method that is similar with \cite{zhang2010convex} for problem (\ref{eqn:xxxx}), which iterates the following two steps until convergence.

 The first step is to optimize $\{\bm\alpha\}_{c=1}^C$ given fixed $\ZZ$, which is equivalent to the following problem:
\begin{equation}
    \label{eqn:bierende2}
   \sum_{c=1}^{C}\Bigg(\max_{\pmb\alpha_c}\sum_{i=1}^m\sum_{j=1}^{n_i}\alpha_{j,c}^i \sum_{k:\Y^{i}_k \in \cY^i}\mu^i_k\by^i_{k,j,c}-\frac{1}{2}\bm\alpha_c^T\wK\bm\alpha_c\Bigg)
\end{equation}
When $\ZZ$ is fixed, the terms in the brackets are mutually independently. Hence, we solve each term
independently, which is a supervised single-task regression problem, where the data from all tasks are considered as the data from a single task.

The second step is to optimize $\ZZ$ given fixed $\{\bm\alpha_c\}_{c=1}^{C}$, which is formulated as
 \begin{eqnarray}
    \label{eqn:bierendexx}
    \min_{\ZZ\in \mathcal{Z}}-\frac{1}{2}&&\sum_{c=1}^{C}\bm\alpha_c^T\wK\bm\alpha_c\nonumber\\
&& + \sum_{i=1}^m\sum_{c=1}^C\sum_{j=1}^{n_i}\alpha_{j,c}^i \sum_{k:\Y^{i}_k \in \cY^i}\mu^i_k\by^i_{k,j,c}
\end{eqnarray}
Note that $\wK$ is a function of $\ZZ$.

\textit{Specifying (\ref{eqn:bierende2}) and (\ref{eqn:bierendexx}) as a part of DMTFC:} We replace $\ZZ$ and $\Z$ in the equations by $\DD$ and $\D$ respectively. For (\ref{eqn:bierende2}), the multitask kernel $\wK$ should be specified by Eq. (\ref{eq:obj_KMTF}). The calculation of $\wK$ will be expensive when the dimension of the observation $d$ is large, since the time complexity of the matrix inversion in (\ref{eq:obj_KMTF}) is $\O(d^3)$ in the worst cases. For (\ref{eqn:bierendexx}), we can get the closed solution of $\DD$ as $\DD = \frac{\left(\sum_{c=1}^{C}\W_c\W_c^T\right)^{\frac{1}{2}}}{\mbox{tr}\left( \left( \sum_{c=1}^{C}\W_c\W_c^T \right)^{\frac{1}{2}} \right)}$
    where $\W_c$ is defined in (\ref{eq:obj_WC}).
    The derivation is analogous to \cite[Appendix 1]{argyriou2008convex}.

    \textit{Specifying (\ref{eqn:bierende2}) and (\ref{eqn:bierendexx}) as a part of DMTRC:} We replace $\ZZ$ and $\Z$ by $\bm\Omega$ and $\A$ respectively in the equations. For (\ref{eqn:bierende2}), $\wK$ should be specified by Eq. (\ref{eq:obj_KMTR}). The calculation of $\wK$ will be expensive when the task number $m$ is large, since the time complexity of the matrix inversion in (\ref{eq:obj_KMTR}) is $\O(m^3)$ in the worst cases. For (\ref{eqn:bierendexx}), we can get the closed solution of $\bm\Omega$ as $\bm\Omega = \frac{\left(\sum_{c=1}^{C}\W_c^T\W_c\right)^{\frac{1}{2}}}{\mbox{tr}\left( \left( \sum_{c=1}^{C}\W_c^T\W_c \right)^{\frac{1}{2}} \right)}$
    where $\W_c$ is defined in (\ref{eq:obj_WC2}).
    The derivation is analogous to \cite[equation 13]{zhang2010convex}.

\section{Learning With Nonlinear Kernels}\label{sec:kernel}
Incorporating the nonlinear feature mapping to DMTFC and DMTRC, we only need to modify their multitask kernel representations. Specifically, for DMTFC, we only need to modify Eq. (\ref{eq:obj_KMTF}) to $K_{\scriptsize\mbox{F}}\left(\x^{i_1}_{j_1},\x^{i_2}_{j_2}\right) = \e_{i_1}^T{\phi(\x^{i_1}_{j_1})}^T\DD(\lambda_1\DD+\lambda_2\I_d)^{-1}{\phi(\x^{i_2}_{j_2})}\e_{i_2}$ and modify Eq. (\ref{eq:obj_WC}) to $\W_c =\sum_{i}\sum_{j}\alpha^i_j\DD\left( \lambda_1\DD+ \lambda_2\I_d \right)^{-1}\phi(\x^i_j)\e_i^T$, where $\phi(\cdot)$ is the kernel-induced feature mapping. Because $\phi(\cdot)$ might be high dimensional or even infinite, such as the Radius-Basis-Function (RBF) kernel, we cannot calculate its representation accurately. Instead, we can use the kernel decomposition techniques, such as kernel principle component analysis or Cholesky decomposition, to get $\phi(\cdot)$ approximately and explicitly.
Similarly, for DMTRC, we only need to modify Eq. (\ref{eq:obj_KMTR}) to $K_{\scriptsize\mbox{R}}\left(\x^{i_1}_{j_1},\x^{i_2}_{j_2}\right) = \e_{i_1}^T\bm\Omega(\lambda_1\bm\Omega+\lambda_2\I_m)^{-1}\e_{i_2}K(\x^{i_1}_{j_1},\x^{i_2}_{j_2})$ and modify Eq. (\ref{eq:obj_WC2}) to $\W_c =\sum_{i}\sum_{j}\alpha^i_j\phi(\x^i_j)\e_i \bm\Omega \left( \lambda_1\bm\Omega+ \lambda_2\I_m \right)^{-1}$, where $K(\x,\y) = \left<\phi(\x),\phi(\y) \right>$.
Because DMTRC can incorporate nonlinear kernels implicitly via the kernel function $K$ while DMTFC needs to calculate the representation of the feature mapping $\phi(\cdot)$ explicitly with additional time and storage complexities of at least $\O(n^2)$. DMTRC is more efficient than DMTFC in kernel learning.

\section{Complexity Analysis}\label{sec:complexity}
Because the optimization algorithm can be seen as a technical combination of SVR-M3C \cite{zhang2012linearithmic}, MTFL \cite{argyriou2008convex}, and MTRL \cite{zhang2010convex}, where the outer two loops (i.e. Sections \ref{subsec:cpa} and \ref{subsec:elm}) is a multitask extension of SVR-M3C and the inner loop (i.e. Section \ref{subsubsection:3}) can be seen as a special case of the multiclass extensions of MTFL/MTRL, the overall time and storage complexities of the optimization algorithm are dominated by the most expensive algorithm between SVR-M3C and MTFL/MTRL. SVR-M3C has a time complexity of $\O(n\log n)$ and a storage complexity of $\O(n)$ \cite{zhang2012linearithmic}. It is also easy to observe that the worst case of MTFL has a time complexity of $\O(n^2+d^3)$ and a storage complexity of $\O(n^2)$, and that the worst case of MTRL has a time complexity of $\O(n^2+m^3)$ and a storage complexity of $\O(n^2)$. Hence, DMTFC is suitable to middle scale and low dimensional problems, while DMTRC is suitable to middle scale problems with small task numbers.
The main obstacle that hinders DMTFC and DMTRC from large scale problems is the time-demanding kernel calculation and matrix inversion in (\ref{eq:obj_KMTF}) and (\ref{eq:obj_KMTR}). To overcome it, dimension reduction techniques, sparse MTL techniques, distributed cluster ensembles and sparse kernel estimations might be helpful. But as will be shown in the experimental section, when the data size is large scale, the benefit of multitask clustering over single-task clustering will vanish. Finally, we do not think the complexity as a huge block that hinders them from practical use.

\section{Experiments}\label{sec:experiments}
In this section, we will compare the proposed DMTFC and DMTRC algorithms with 10 clustering algorithms on the UCI \textit{pendigits} toy dataset and two benchmark datasets -- \textit{multi-domain newsgroups} dataset and \textit{multi-domain sentiment} dataset. All experiments are run with MATLAB R2012b on a 2.40 GHZ 8-core Itel(R) Xeon(R) Server running Windows XP with 16 GB memory.

The competitive algorithms can be categorized to two classes. The first class are the Single Task Clustering (STC) algorithms. They are 1) K-Means (KM), 2) Kernel K-Means (KKM) with the RBF kernel, 3) Normalized Cut (NC) \cite{shi2002normalized} with the RBF kernel, 4) the Discriminative STC (DSTC) algorithm, 5) KM that groups all tasks into a single task (ALL KM), 6) ALL KKM, and 7) ALL NC, where DSTC is the single task version of our DMTRC. The DSTCs with the linear kernel and the RBF kernel are denoted as DSTC$_l$ and DSTC$_r$ respectively.
 The second class are the state-of-the-art MTC algorithms. They are 1) Learning the Shared Subspace for MTC (LSSMTC) \cite{gu2009learning}, 2) Learning a Spectral Kernel for MTC (LSKMTC) \cite{gu2011learning}, and 3) Multitask Bregman Clustering with Pairwise task regularization (MBC-P) \cite{zhang2011multitask}.  The experiments of the competitive algorithms are run exactly with the authors' experimental settings.

For our DMTFC and DMTRC, $\lambda_1$ and $\lambda_2$ are both searched from $\{2^{-10}, 2^{-8},\ldots,2^{-2} \}$, we make a strong assumption that we know the class distribution beforehand, so that $l_{i,c}$ in Eq. (\ref{eq:m3c_settt}) is set to $l_{l,c} ={ \mathbf{1}_{n_i}^T{{\y^{\star}}}_c^i }/{n_i}$ where ${{\y^{\star}}}_c^i$ is the $c$-th column of the ground truth label matrix $\Y^i$ of the $i$-th task. The DMTFC and DMTRC with the linear kernel are denoted as DMTFC$_l$ and DMTRC$_l$ respectively, and those with the RBF kernel are denoted as DMTFC$_r$ and DMTRC$_r$ respectively.

The kernel width of  all algorithms that work with the RBF kernel is searched from $\{2^{-2}, 2^{-1},2^{0}, 2^{1},2^{2}\}\cdot A$, where $A$ is the average Euclidean distance of the data. The data are normalized into the range of [0,1] in dimension. All computation time is recorded except that consumed on normalizing the dataset.
The datasets used in experiments are provided with labels. Therefore, the performance is evaluated as comparing the predicted labels with the ground truth labels using Normalized Mutual Information (NMI) \cite{strehl2003cluster}.




\subsection{Results on Pendigits Dataset}
In this subsection, the \textit{pendigits} dataset in the UCI machine learning repository is used as a toy dataset for capturing the main characteristics of the proposed DMTC algorithms. The pendigits dataset contains 10 hand written integer digits ranging from 0 to 9. It consists of 11256 observations and 16 attributes. Each digit consists of about 1100 observations. Although the pendigits dataset is a single task clustering problem, we generate a multitask clustering problem from it: First, we take $0,3,6,8,9$ as one group, and $1,2,4,5,7$ as the other group. Then, we repeatedly sample 20 observations from each digit in the first group for 3 times. Again, we do the same thing to the second group. Because each repeat forms a 5-class clustering task that contains 100 observations, we obtain 6 tasks in total, where Tasks 1, 2 and 3 are examples from the first group and Tasks 4, 5, and 6 are examples from the second group. Because the data are too small to cover the distributions of the digits, we can regard Tasks 1, 2 and 3 are relevant but not the same, so as to Tasks 4, 5, and 6. We also regard that Tasks 1, 2 and 3 are irrelevant to Tasks 4, 5, and 6.
A visualized example of the data distributions associated with the six tasks are shown in Fig. \ref{fig:wavefig_featurefusion}. We run three jobs on the six tasks. Job 1 is to cluster Tasks 1, 2, and 3. Job 2 is to cluster Tasks 4, 5, and 6. Job 3 is to cluster Tasks 1--6 together. For each MTC job, we repeat the experiment 30 times. For each single repeat, we also repeat the referenced algorithms 50 times and report the average results. For DMTFC$_r$, KPCA is used for getting $\phi(\x)$ explicitly. It retains the top 100 largest eigenvalues and their eigenvectors.


\begin{figure}[t]
\center
\scalebox{1}{
\centerline{\includegraphics[width=3in]{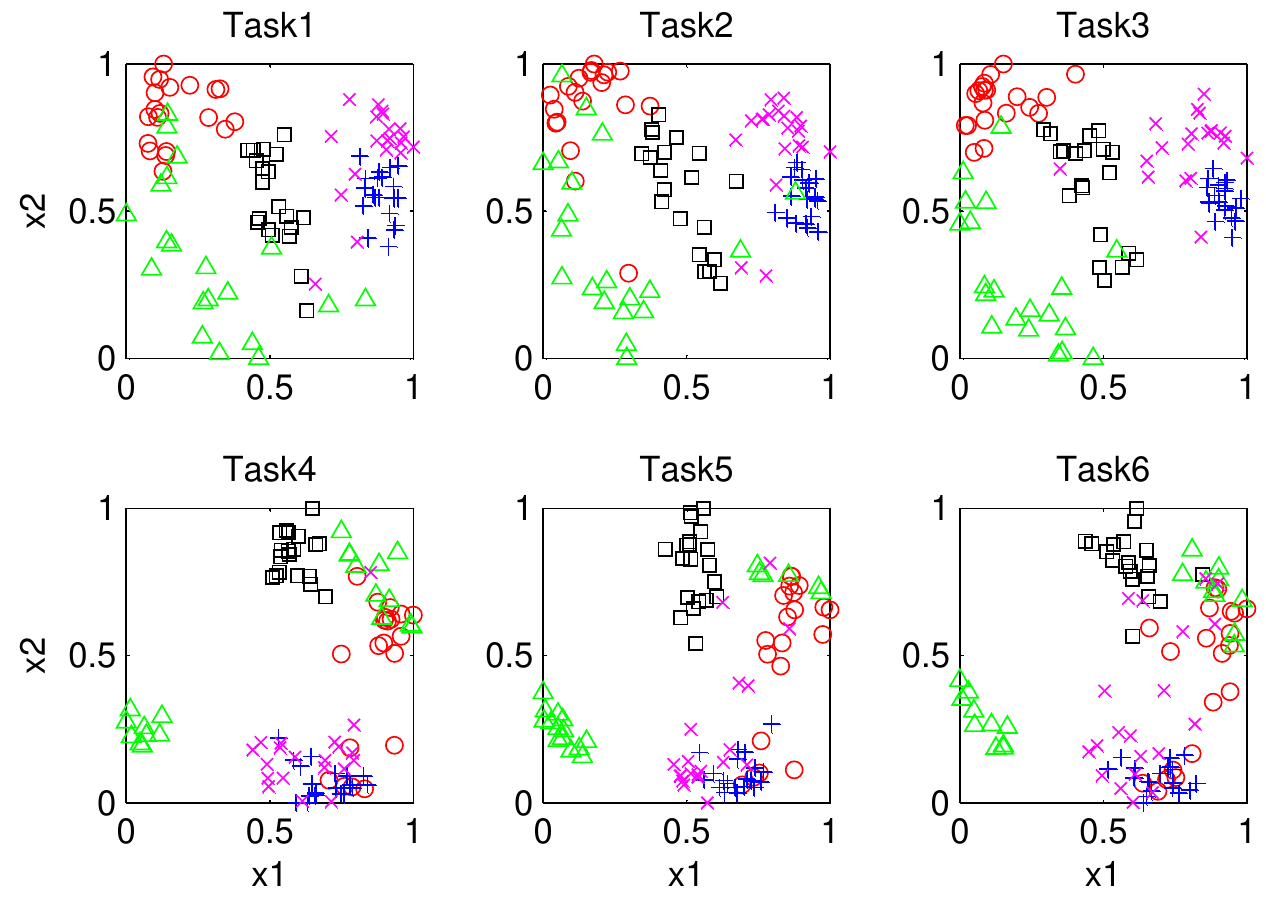}}}
\caption{{Visualization of the tasks on the pendigits data. The true labels are indicated by different colors and different symbols. PCA is used to generate the figure.} }
\label{fig:wavefig_featurefusion}
\end{figure}

\begin{figure}[t]
\scalebox{1}{
\centerline{\includegraphics[width=3.25in]{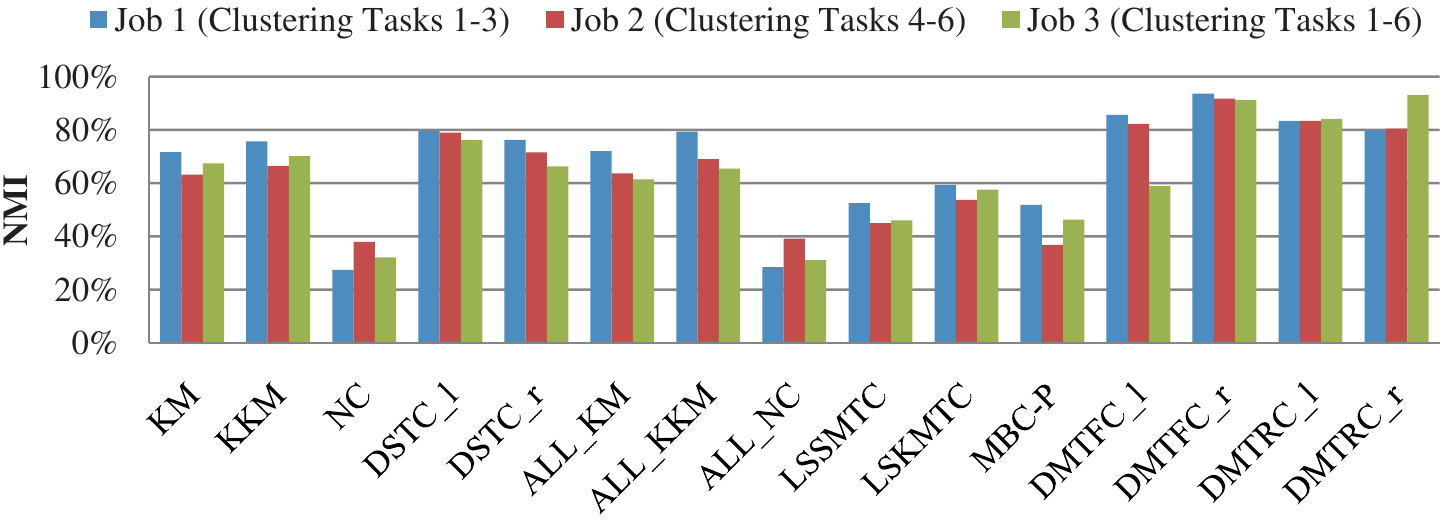}}}
\caption{{NMI comparison on the pendigits dataset.} }
\label{fig:wavefig_featurefusion2}
\end{figure}

\begin{figure}[t]
\scalebox{1}{
\centerline{\includegraphics[width=2.3in]{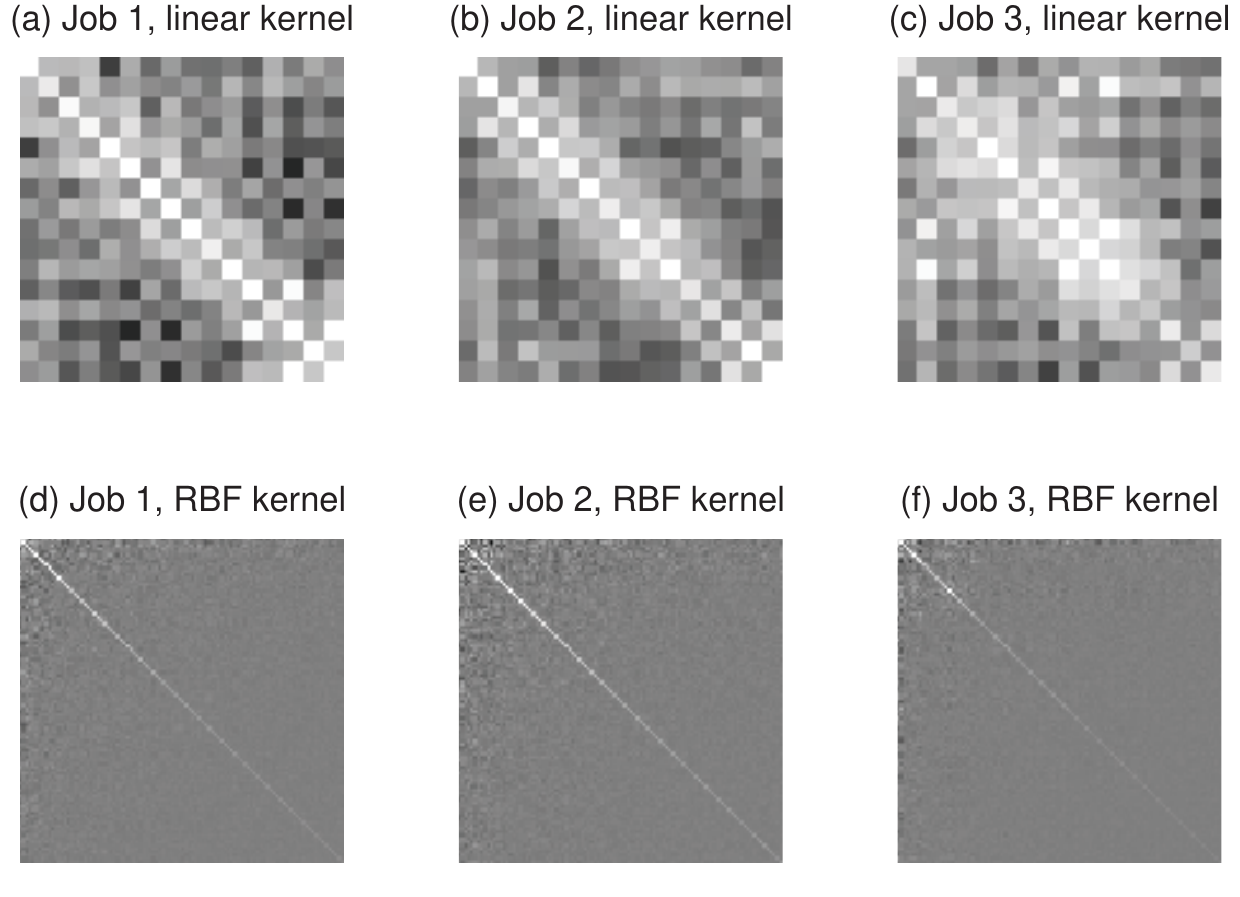}}}
\caption{{Visualization of the shared feature filter learned by DMTFC on the pendigits dataset (i.e. the learned covariance between the features, i.e. $\DD$). The more grey the grid is, the weaker the filter contributes to the new feature representation.} }
\label{fig:wavefig_featurefusion3}
\end{figure}

\begin{figure}[t]
\scalebox{1}{
\centerline{\includegraphics[width=2.3in]{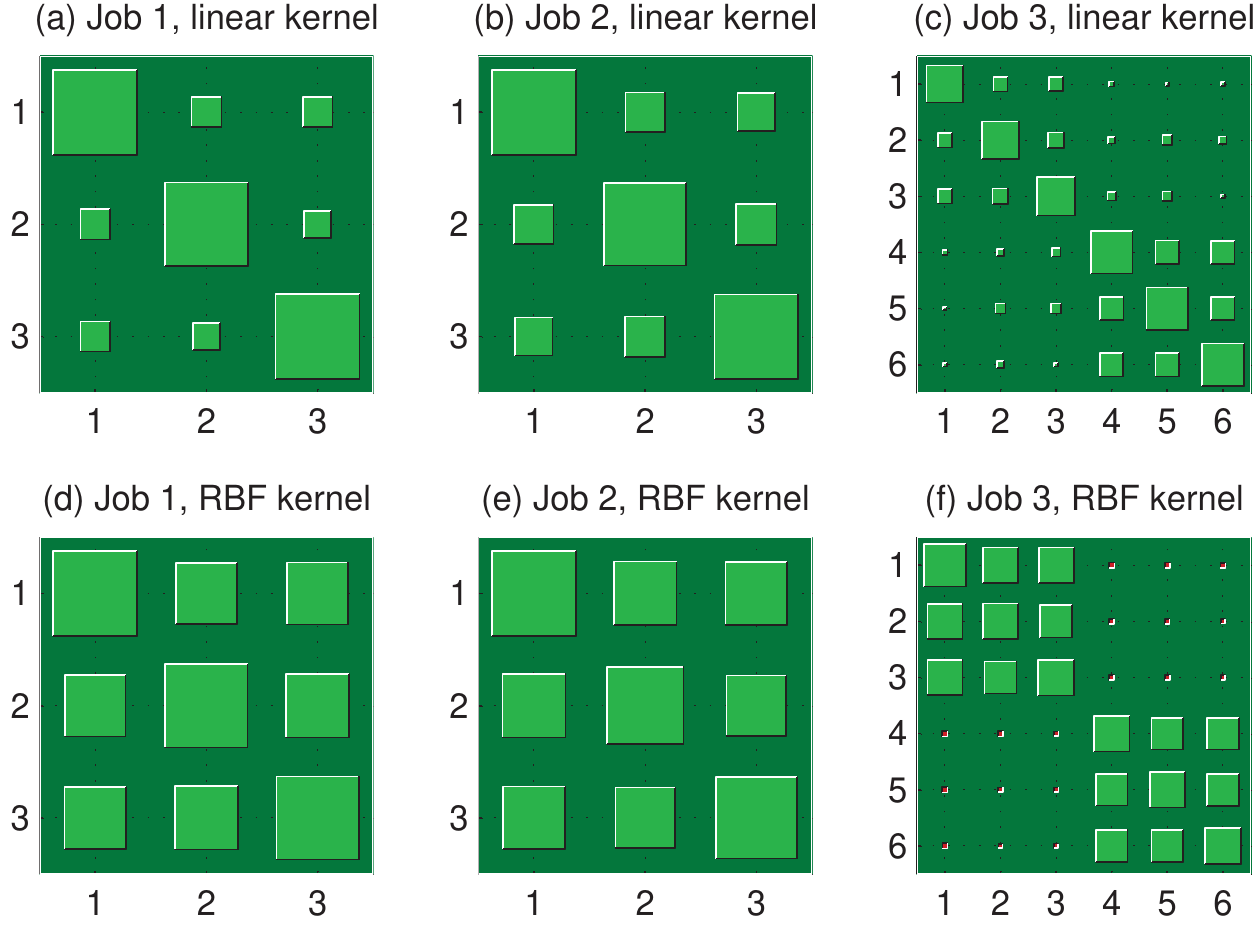}}}
\caption{{Hinton diagram of the task relationship learned by DMTRC on the pendigits dataset (i.e. the learned covariance between the task-specific models, i.e. $\bm\Omega$). The grid in green means the tasks are related. The grid in red means the tasks are reverse. The bigger the grid is, the more positive/negative the relationship is.} }
\label{fig:wavefig_featurefusion4}
\end{figure}

Fig. \ref{fig:wavefig_featurefusion2} shows the NMI comparison over the three jobs. From the figure, we can get the following interesting phenomena.
First, except for DMTFC$_l$, the proposed DMTC algorithms achieve higher NMIs than the referenced methods. This phenomenon demonstrates the effectiveness of the proposed MTC algorithms.
Second, except for DMTRC$_r$, the NMIs of all algorithms in Job 3 are lower than those in Jobs 1 and 2. This phenomenon is particularly apparent in DMTFC$_l$. It shows that the unrelated tasks or the reverse distributions worsen the clustering performance significantly. This phenomenon also shows that when the tasks are really related, learning a powerful feature representation is better than minimizing the distances between the task-specific models, but when the tasks are irrelevant, learning a feature representation forcibly is very harmful while learning the task relationship can avoid the negative transfer amazingly. To better explain this, we visualize $\DD$ and $\bm\Omega$ in Figs. \ref{fig:wavefig_featurefusion3} and \ref{fig:wavefig_featurefusion4} respectively. For DMTFC, in Figs. \ref{fig:wavefig_featurefusion3}a, \ref{fig:wavefig_featurefusion3}b, \ref{fig:wavefig_featurefusion3}d, \ref{fig:wavefig_featurefusion3}e, and \ref{fig:wavefig_featurefusion3}f,  the relationships of the features have been learned successfully by DMTFC. But in Fig. \ref{fig:wavefig_featurefusion3}c, DMTFC$_l$ fails in learning a common feature representation, i.e., most features are recognized as mutually independent. For DMTRC, in Fig. \ref{fig:wavefig_featurefusion4}, we can observe that DMTRC can capture the relationships of the tasks successfully no matter in Jobs 1 and 2 or in Job 3, which accounts for the immunity of DMTRC to the negative transfer. Note that this study has been conducted in many supervised MTL works, but to our knowledge, this is the first work that captures the task relationship successfully in the unsupervised learning scenario.
Third, the referenced MTCs do not achieve better NMIs than the STCs. One possible explanation for this is that the referenced MTCs suffer from local minima more seriously than the STCs.

 The above experiment assumes that the class distributions are known with all parameters $l_{i,c}$ setting to the ideal situation ${ \mathbf{1}_{n_i}^T{{\y^{\star}}}_c^i }/{n_i} = 0$. In this paragraph, we will investigate how the class evenness assumption affects the performance by setting all $\{\{l_{i,c}\}_{c=1}^{C}\}_{i=1}^{m}$ to the same value that is selected from $\{0,0.03,0.1, 0.2, 0.3\}$. The results are shown in Fig. \ref{fig:class_balance}. From the figure, we can observe the following phenomena: 1) In all settings, DMTC can benefit from joint training of all tasks except DMTFC$_l$. 2) Setting the class balance parameters to a value 0.03 that is slightly biased from the ideal situation can achieve even better performance, which means that if we select $l$ properly around the ideal value, the performance is guaranteed. 3) DMTC is sensitive to $l$, if parameter $l$ is set improperly, the performance will degrade dramatically. Hence, for DMTC's practical use, we should select $l$ carefully.

 \begin{figure}[t]
\scalebox{1}{
\centerline{\includegraphics[width=3.25in]{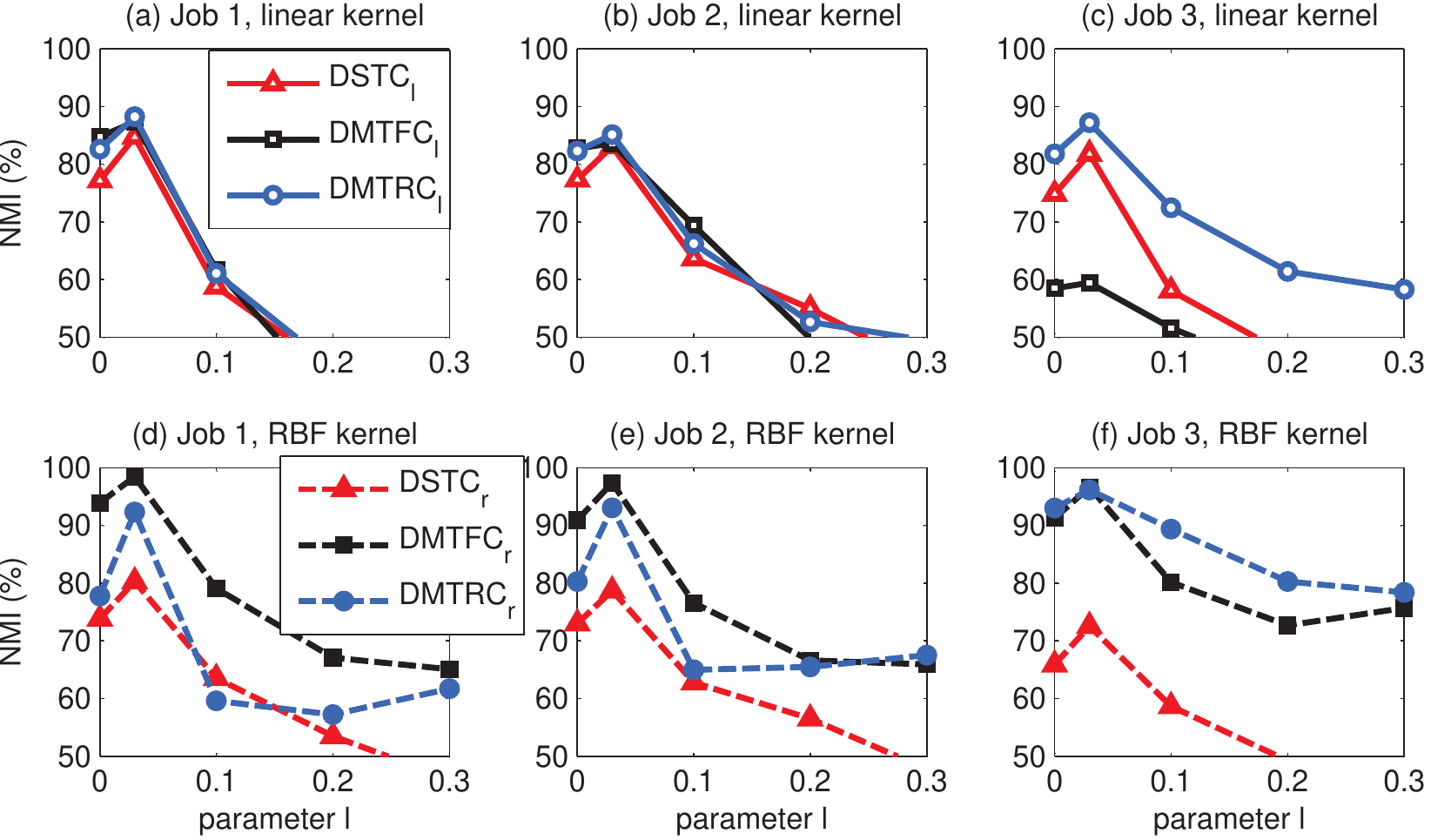}}}
\caption{{Clustering performance with respect to the class balance parameter $l$ on the pendigits dataset.} }
\label{fig:class_balance}
\end{figure}

\subsection{Complexity Analysis on Synthetic Dataset}
In this subsection, we will study the time complexities of DMTFC and DMTRC with respect to the number of examples of each task (i.e. $n$), feature dimension (i.e. $d$), and number of tasks (i.e. $m$) respectively. We generate each dimension of each class of each binary-class synthetic task from a Gaussian distribution, whose mean is sampled uniformly from $[0,1]$ and variance varies uniformly in $[0.5, 5]$.
The parameters of the proposed methods are as follows. Only linear kernel is considered. $\lambda_1=\lambda_2 = 2^{-10}$, $l=0$.

The time complexities with respect to $n$ are shown in Fig. \ref{fig:synthetic}a, where $d=3$ and $m=3$. The time complexities with respect to $d$ are shown in Fig. \ref{fig:synthetic}b, where $n=100$ and $m=3$. The time complexities with respect to $m$ are shown in Fig. \ref{fig:synthetic}c where $n= 3000/m$ and $d=10$. From the figures, we can conclude that the time complexities with respect to $n$ are $\O(n^2)$, but the time complexities with respect to $d$ and $m$ are generally not in the worst cases, i.e. $\O(d^3)$ and $\O(m^3)$. The reasons are analyzed as follows. Compared to the CPU time consumed on constructing the kernel, which scales with $\O((nm)^2)$, the time consumed on the matrix inverse is quite small. Moreover, when $nm$ is given, more task number only means the multitask-kernel matrix is more sparse, so that the methods need even less time to calculate the kernel matrix. This accounts for the interesting phenomenon of Fig. \ref{fig:synthetic}c.

\begin{figure}[t]
\scalebox{1}{
\centerline{\includegraphics[width=3.25in]{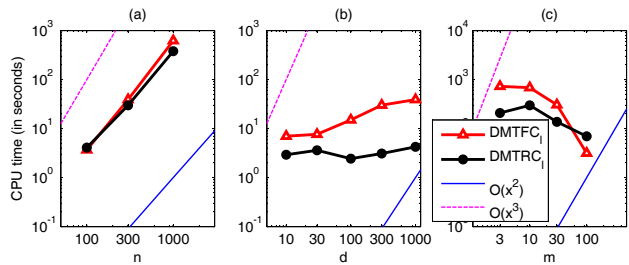}}}
\caption{{Time complexities with respect to the data set size of each task ($n$), feature dimension ($d$), and number of tasks ($m$). The symbol $x$ in the legends $\O(x^2)$ and $\O(x^2)$ stands for $n$, $d$ or $m$ in (a), (b) or (c) respectively.} }
\label{fig:synthetic}
\end{figure}

\subsection{Results on Multi-Domain Newsgroups Dataset}
The 20-newsgroups dataset is a widely used benchmark dataset that is a collection of about 20000 messages collected from 20 different \textit{usenet} newsgroups, 1000 messages from each. After postprocessing, each message is a vector with 26214 dimensions. We define a three class MTC job on the 20-newsgroups in Table \ref{tab:Compare_m3c}. From the table, we can see that Tasks 1 and 2 are highly related, Tasks 1 to 5 are somewhat related, while Task 6 seems an outlier task.
Based on the above task definition, we generate 4 MTC problems by randomly selecting 5\%, 10\%, 20\%, and 40\% of the data from each class, so as to observe how the data number influences the effectiveness of DMTC.  Because most algorithms are quite inefficient in high dimensional datasets, we use PCA to project the dataset to a 100-dimensional subspace. DMTC and DSTC only use the linear kernel. The DMTRC$_l$ and DSTC$_l$ without the PCA projection, which are denoted as *DMTRC$_l$ and *DSTC$_l$ respectively, will also be investigated.

\begin{table} [t]
\caption{\label{tab:Compare_m3c} {Task definition on the 20-newsgroups dataset.}}
\centerline{\scalebox{0.8}{
\begin{tabular}{l||l}
 \hline
ID & Names of the classes\\
\hline
\hline
Task 1 & comp.sys.mac.hardware \textit{vs.} rec.sport.hockey \textit{vs.} sci.electronics\\
\hline
Task 2 & comp.sys.ibm.pc.hardware \textit{vs.} rec.sport.baseball \textit{vs.} sci.crypt\\
\hline
Task 3& comp.windows.x \textit{vs.} rec.autos \textit{vs.} talk.politics.guns\\
\hline
Task 4 & comp.os.ms-windows.misc \textit{vs.} sci.med \textit{vs.} talk.politics.mideast \\
\hline
Task 5 & rec.motorcycles \textit{vs.} sci.space \textit{vs.} talk.politics.misc\\
\hline
Task 6 & misc.forsale \textit{vs.} alt.atheism \textit{vs.} soc.religion.christian\\
\hline
\end{tabular}}}
\end{table}

\begin{figure}[t]
\scalebox{1}{
\centerline{\includegraphics[width=3.25in]{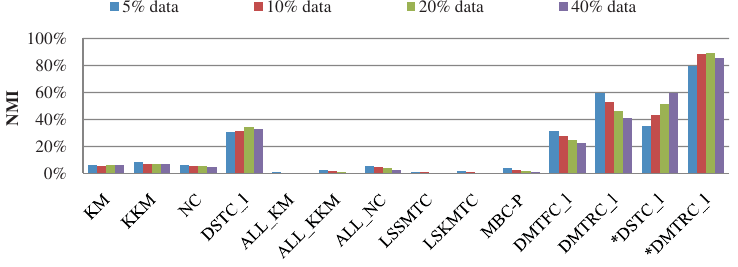}}}
\caption{{NMI comparison on the 20-newsgroups dataset. $a\%$ is short for ``experiments running with $a\%$ data of the dataset.''} }
\label{fig:20_acc}
\end{figure}

\begin{figure}[t]
\scalebox{1}{
\centerline{\includegraphics[width=3.25in]{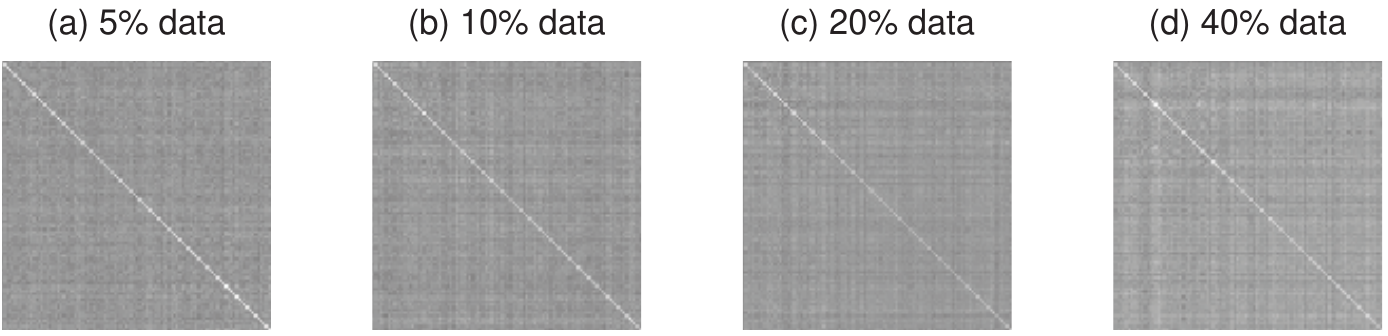}}}
\caption{{Visualizations of $\DD$ of DMTFC$_l$ on the 20-newsgroups dataset.} }
\label{fig:20_DD}
\end{figure}

\begin{figure}[t]
\scalebox{1}{
\centerline{\includegraphics[width=3.25in]{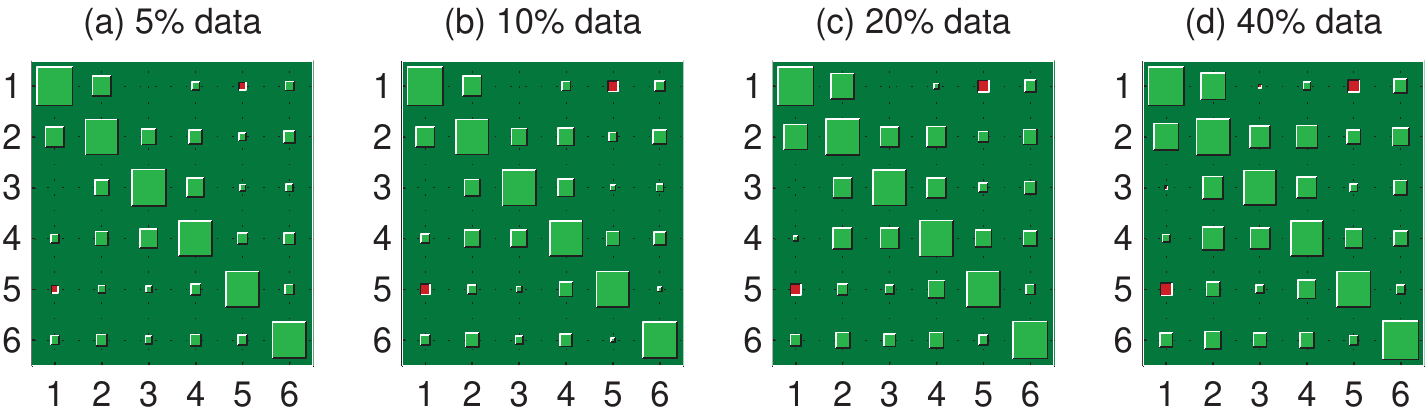}}}
\caption{{Hinton diagrams of $\bm\Omega$ of DMTRC$_l$ on the 20-newsgroups dataset.} }
\label{fig:20_NMI2}
\end{figure}

\begin{figure}[t]
\scalebox{1}{
\centerline{\includegraphics[width=3.25in]{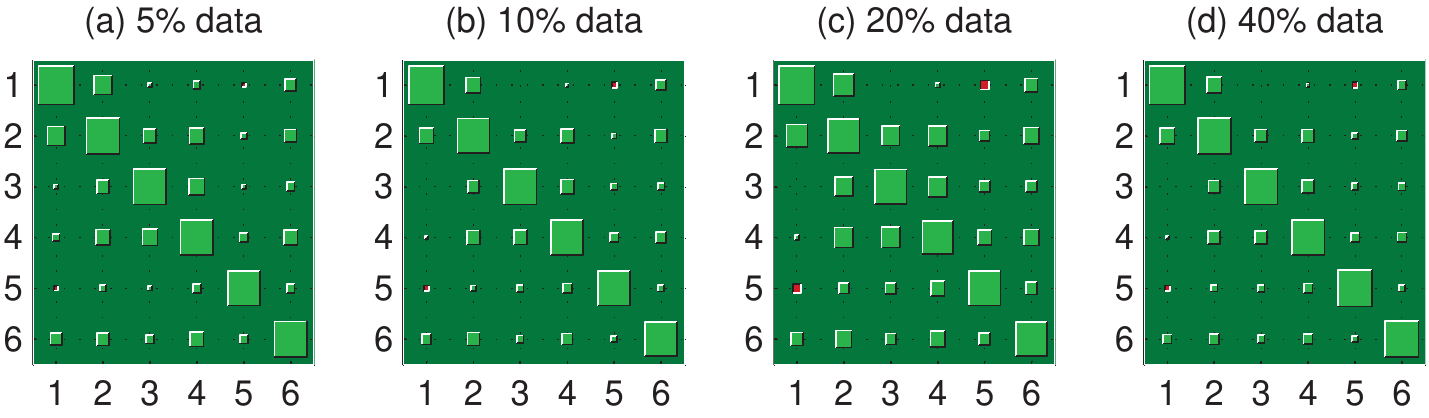}}}
\caption{{Hinton diagrams of $\bm\Omega$ of *DMTRC$_l$ on the 20-newsgroups dataset.} }
\label{fig:20_NMI}
\end{figure}

\begin{figure}[t]
\scalebox{1}{
\centerline{\includegraphics[width=3.25in]{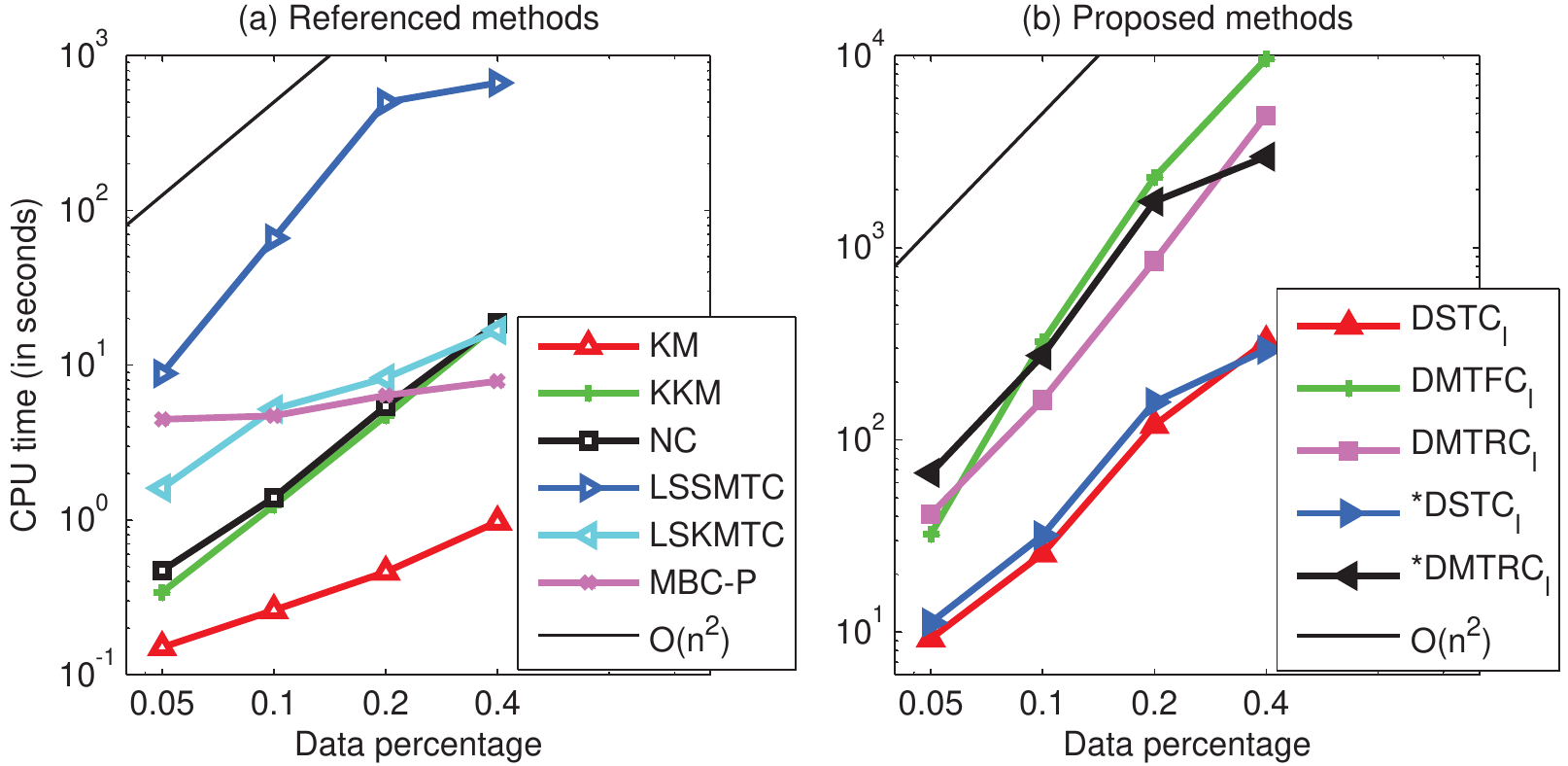}}}
\caption{{CPU time comparison on the 20-newsgroups.} }
\label{fig:20_CPU}
\end{figure}

Fig. \ref{fig:20_acc} shows the NMI comparison. From the figure, we can observe the following experimental phenomena. First, the proposed convex discriminative clustering algorithms are apparently better than the referenced methods in the same experimental environment. Second, DMTRC$_l$ is much better than DSTC$_l$ which shows that the task relationship is learned successfully. Third, DMTFC$_l$ is slightly worse than DSTC$_l$ which means that we cannot learn a strong shared feature representation across the tasks. This phenomenon might be caused by the PCA projection where much useful information for constructing the feature representation is lost, however, we cannot get its performance in the original dataset due to its inefficiency in high dimensional data.  Fourth, when the PCA projection is used to form the experimental environment, the performances of the clustering algorithms are getting worse when more data is used. On the contrary, when PCA is not used, the performances of both *DSTC$_l$ and *DMTRC$_l$ are getting better. This phenomenon tells us that when more data is available, the features should provide more abundant information so as to make the models available to be more complicated for describing the more variant distributions. It also shows the power of DSTC and DMTRC on high dimensional datasets. Moreover, it demonstrates that the power of the proposed discriminative clusterings do not rely on the predefined models for describing the data distribution which is an apparent superiority to the generative clusterings.

To show how well the feature representation is learned, we visualize $\DD$ of DMTFC$_l$ in Fig. \ref{fig:20_DD}. The figure shows that most features are considered as mutually independent, which might account for the ineffectiveness of DMTFC$_l$.

To demonstrate how well the task relationship is learned, we list the hinton diagrams of $\bm\Omega$ of DMTRC$_l$ and *DMTRC$_l$ in Figs. \ref{fig:20_NMI2} and \ref{fig:20_NMI} respectively. The figures show that both methods can learn the task relationships in different percentages of data equivalently well. They also show that the task relationship is different from what we have defined in Table \ref{tab:Compare_m3c}. As an example, Task 6 is originally designed as an outlier task, but it contributes to the performance positively. This phenomenon is worth of further study.

Fig. \ref{fig:20_CPU} gives the CPU time comparison. The figure shows that although the proposed methods have higher absolute time, both the proposed algorithms and the referenced methods have a time complexity of $\O(n^2)$ except KM, LSKMTC and MBC-P, which means that they are all unavailable for large-scale problems.

The results on each individual task and the stability analysis are described in the supplementary materials.

\subsection{Results on Multi-Domain Sentiment Dataset}

The multi-domain sentiment dataset is a widely used benchmark dataset that was originally designed for the MTL research propose. It contains product reviews taken from Amazon.com from many product types (domains or tasks). For a convenient comparison with the supervised MTFL and MTRL, we adopt the same experimental setting as \cite{zhang2010convex}. Specifically, the dataset in use is a postprocessed version that aims to classify the reviews of some products to two classes: positive or negative reviews. It
contains four binary-class tasks: books, DVDs, electronics, and kitchen appliances. Each task contains 2000 observations, in which 1000 reviews are labeled as positive and the other 1000 as negative. Each observation is a vector with 473853 dimensions.Note that we discarded 3 features that contain unrecognized characters.
 We generate 3 MTC problems by randomly selecting 10\%, 30\%, and 50\% of the data from each task. Other experimental settings are the same as those on the 20-newsgroups dataset.

 Fig. \ref{fig:S_acc} gives the NMI comparison. The experimental phenomena are quite similar with those on the 20-newsgroups dataset. The only difference is that when more data is available and when PCA is used to project the high dimensional dataset to a low dimensional space, the clustering algorithms are generally getting better on the sentiment dataset while the algorithms are getting worse on the 20-newsgroups dataset. This might be caused by the difficulties of the datasets. That is to say, projecting the data to 100 dimensional subspace is enough to catch the useful information on the sentiment dataset while doing so is not enough on the 20-newsgroups dataset. To support this explanation, we visualize $\DD$ of DMTFC$_l$ in Fig. \ref{fig:S_DD} and compare it with the visualizations of $\DD$ in Fig. \ref{fig:20_DD}. We can see that the filters $\DD$ on the sentiment set are more effective than those on the 20-newsgroups set.

 \begin{figure}[t]
\scalebox{1}{
\centerline{\includegraphics[width=3.25in]{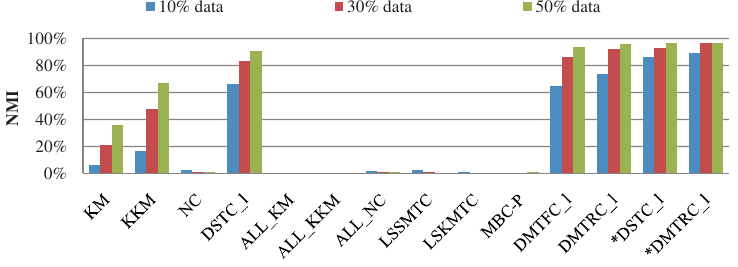}}}
\caption{{NMI comparison on the sentiment dataset. $a\%$ is short for ``experiments running with $a\%$ data.''} }
\label{fig:S_acc}
\end{figure}

\begin{figure}[t]
\scalebox{1}{
\centerline{\includegraphics[width=2.5in]{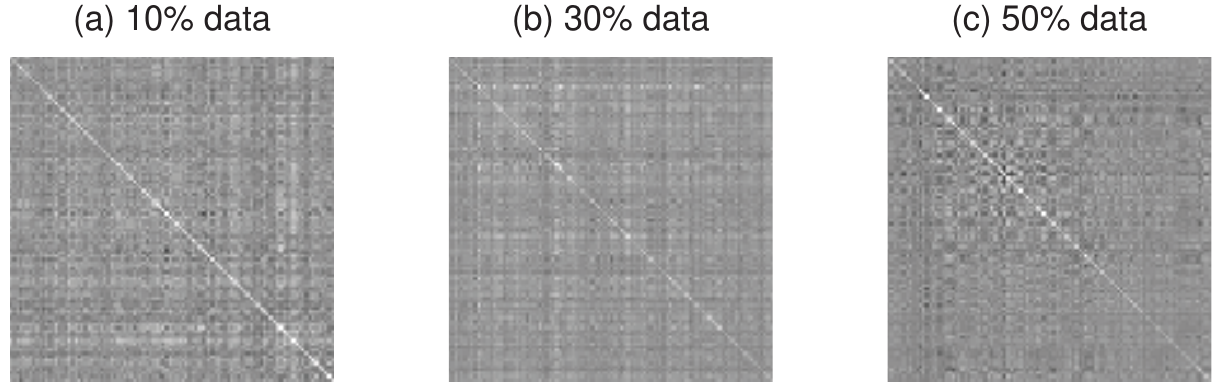}}}
\caption{{Visualizations of $\DD$ of DMTFC$_l$ on the sentiment dataset.} }
\label{fig:S_DD}
\end{figure}

\begin{figure}[t]
\scalebox{1}{
\centerline{\includegraphics[width=2.5in]{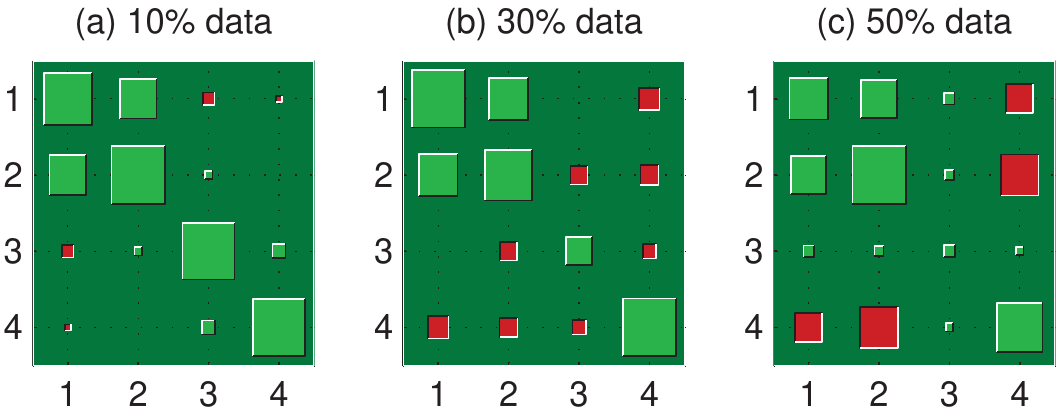}}}
\caption{{Hinton diagrams of $\bm\Omega$ of DMTRC$_l$ on the sentiment dataset.} }
\label{fig:S_NMI2}
\end{figure}

\begin{figure}[t]
\scalebox{1}{
\centerline{\includegraphics[width=2.5in]{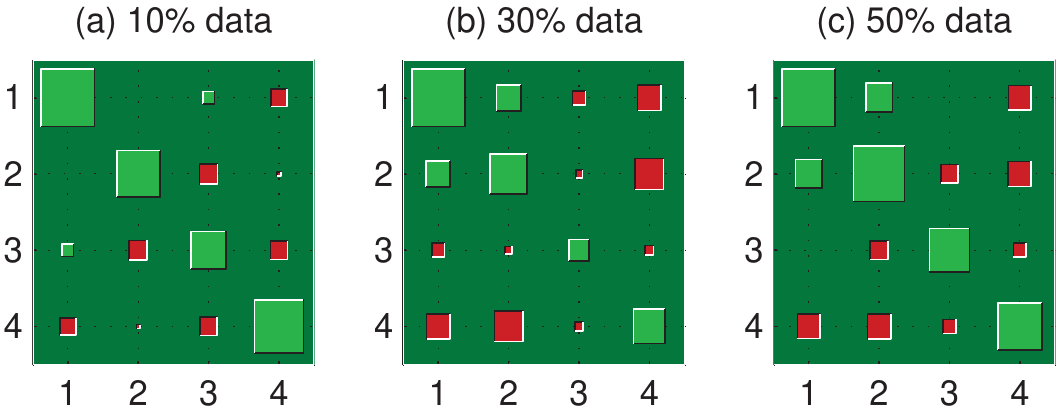}}}
\caption{{Hinton diagrams of $\bm\Omega$ of *DMTRC$_l$ on the sentiment dataset.} }
\label{fig:S_NMI}
\end{figure}

\begin{figure}[t]
 \centering
 \subfigure{
         \begin{minipage}[t]{2.75in}
         \includegraphics[width=\textwidth]{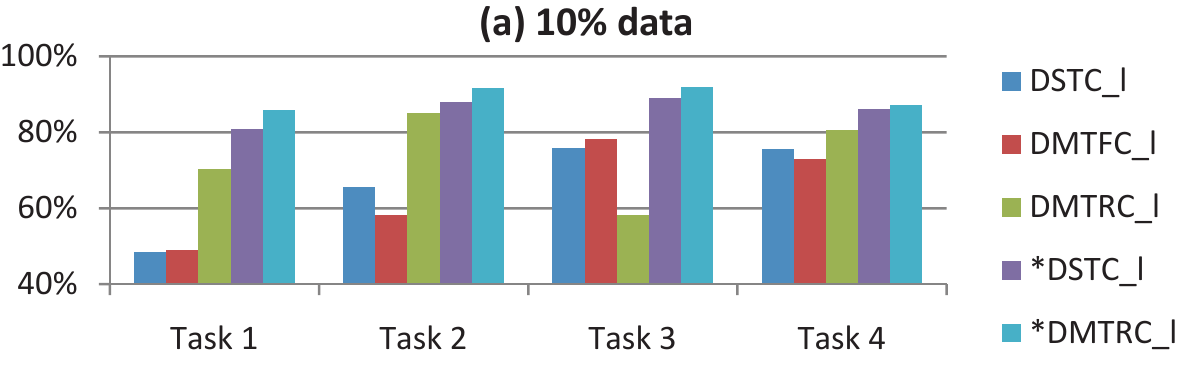}
         \end{minipage}
 }
 \subfigure{
         \begin{minipage}[t]{2.75in}
         \includegraphics[width=\textwidth]{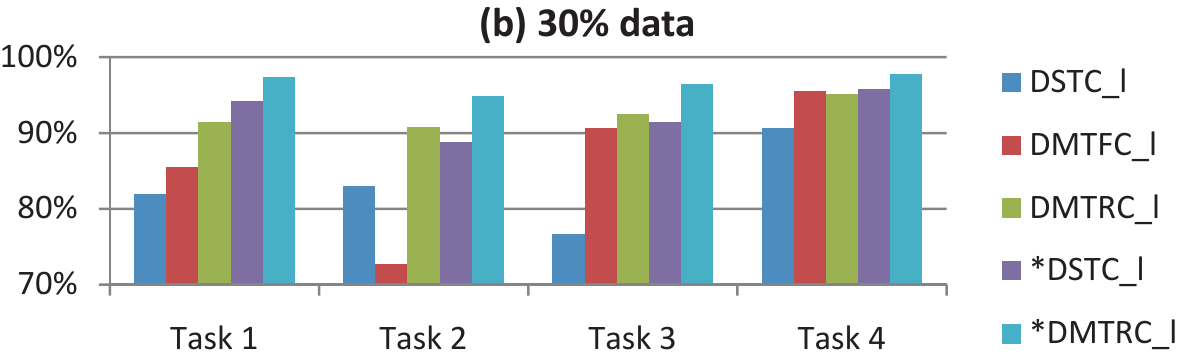}
         \end{minipage}
 }
 \subfigure{
         \begin{minipage}[t]{2.75in}
         \includegraphics[width=\textwidth]{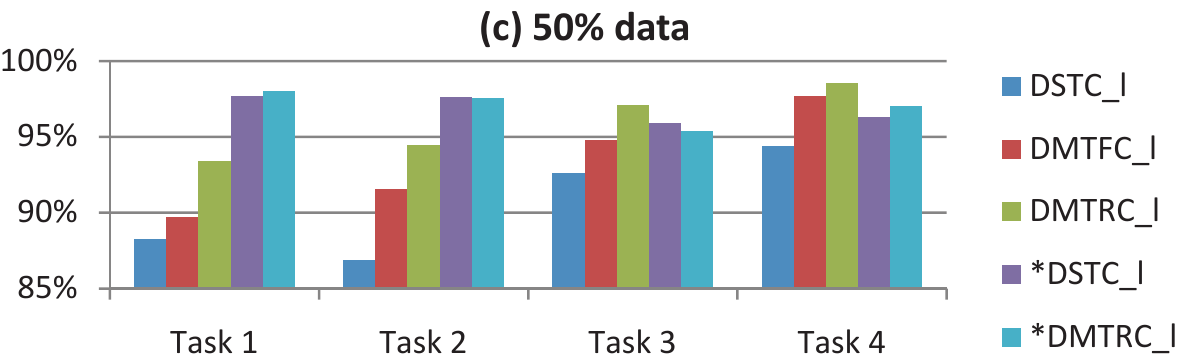}
         \end{minipage}
 }
 \caption{{NMI comparison between the proposed methods on the individual tasks with respect to different percentages of data on the sentiment dataset.} }
 \label{fig:indiv}
 \end{figure}

We provide the hinton diagrams of $\bm\Omega$ of DMTRC$_l$ and *DMTRC$_l$ in Figs. \ref{fig:S_NMI2} and Figs. \ref{fig:S_NMI}. We further provide the performance of the proposed algorithms on the individual tasks in Fig. \ref{fig:indiv}. The experimental phenomena in Fig. \ref{fig:indiv} are consistent with those in Fig. \ref{fig:S_acc} and are comparable with those yielded by the supervised counterparts of the proposed clusterings, i.e. MTFL and MTRL (see \cite[Section 4.3]{zhang2010convex}).
Finally, we list the running time of the methods in Fig. \ref{fig:3.xxx}. The results are consistent with the results in Fig. \ref{fig:20_CPU}.

\begin{figure}
\scalebox{1}{
\centerline{\includegraphics[width=3.25in]{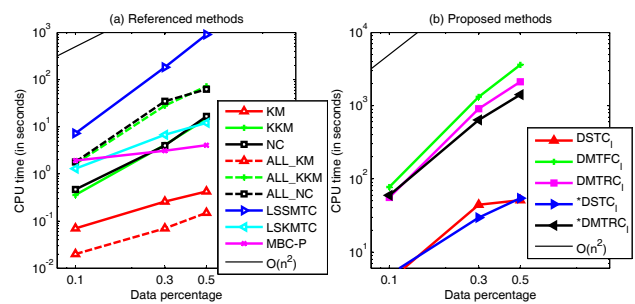}}}
\caption{{CPU time comparison on the sentiment dataset.} }
\label{fig:3.xxx}
\end{figure}

\section{Conclusions and Future Work}\label{sec:conclusion}
In this paper, we have proposed a novel Bayesian DMTC framework. Within the framework, we have implemented two multiclass DMTC objectives by specifying the framework with four assumptions. The first one, named DMTFC, works under the multivariate Gaussian prior that models a shared feature representation across tasks, while the second one, named DMTRC, models the task relationship. Both objectives are formulated as difficult MIP problems. We have further relaxed the MIP problems to convex optimization problems and solve the relaxed problems efficiently in a uniform alternating optimization procedure. Technically, the two convex DMTC algorithms can be seen as the objective combination of the supervised MTFL/MTRL and the unsupervised SVR-M3C. Experimental comparison with 7 STC algorithms as well as 3 state-of-the-art MTC algorithms on the pendigits, multi-domain newsgroups and multi-domain sentiment datasets demonstrated the effectiveness of the proposed algorithms.

\ifCLASSOPTIONcompsoc
\else

\fi



\ifCLASSOPTIONcaptionsoff
  \newpage
\fi

\bibliographystyle{IEEEtran}
\bibliography{zxlrefs} 

%

\end{document}